\newtheorem{claim}{Claim}
\def\eqref#1{equation~\ref{#1}}
\def\1{\bm{1}}
\def\eps{{\epsilon}}
\DeclareMathAlphabet{\mathsfit}{\encodingdefault}{\sfdefault}{m}{sl}
\SetMathAlphabet{\mathsfit}{bold}{\encodingdefault}{\sfdefault}{bx}{n}
\newcommand{\E}{\mathbb{E}}
\newcommand{\red}[1]{\textcolor{red}{#1}}
\title{SOAP: Improving and Stabilizing Shampoo using Adam}
\author{
Nikhil Vyas\thanks{Equal contribution. Correspondence to \texttt{vyasnikhil96@gmail.com}.}\\
Harvard University
\And
Depen Morwani$^*$\\
Harvard University
\And
Rosie Zhao\\
Harvard University
\And
Mujin Kwun\\
Harvard University
\And
Itai Shapira\\
Harvard University
\And
David Brandfonbrener\\
Kempner Institute at Harvard University
\And
Lucas Janson\\
Harvard University
\And 
Sham Kakade\\
Kempner Institute at Harvard University
}
\begin{document}

\maketitle

\begin{abstract}
There is growing evidence of the effectiveness of Shampoo, a higher-order preconditioning method, over Adam in deep learning optimization tasks. However, Shampoo's drawbacks include additional hyperparameters and computational overhead when compared to Adam, which only updates running averages of first- and second-moment quantities. This work establishes a formal connection between Shampoo (implemented with the 1/2 power) and Adafactor --- a memory-efficient approximation of Adam --- showing that Shampoo is equivalent to running Adafactor in the eigenbasis of Shampoo's preconditioner. This insight leads to the design of a simpler and computationally efficient algorithm: \textbf{S}hampo\textbf{O} with \textbf{A}dam in the \textbf{P}reconditioner's eigenbasis (SOAP).
With regards to improving Shampoo's computational efficiency, the most straightforward approach would be to simply compute Shampoo's eigendecomposition less frequently. 
Unfortunately, as our empirical results show, this leads to performance degradation that worsens with this frequency.
SOAP mitigates this degradation by continually updating the running average of the second moment, just as Adam does, but in the current (slowly changing) coordinate basis. Furthermore, since SOAP is equivalent to running Adam in a rotated space, it introduces only one additional hyperparameter (the preconditioning frequency) compared to Adam. We evaluate SOAP on language model pre-training, with experiments on 360m and 660m sized models. In the large batch regime, SOAP reduces the number of iterations by over 40\% and wall clock time by over 35\% compared to AdamW, with approximately 20\% improvements in both metrics compared to Shampoo. An implementation of SOAP is available at \url{https://github.com/nikhilvyas/SOAP/tree/main}.%
\end{abstract}

\section{Introduction}

With ever-increasing costs of LLM training, optimization efficiency has become a central question in the field of deep learning. Several recent works have tackled this challenge by addressing both the memory \citep{galore, 4bitshampoo} and compute \citep{anil2020scalable} footprint of optimizers. In Algoperf~\citep{dahl2023benchmarking}, a recent optimization efficiency benchmark, Shampoo \citep{gupta2018shampoo}, a second-order algorithm, outperformed all other submissions, including Adam \citep{adam}, reducing wall-clock time by 28\% \citep{algoperfresults}. Higher-order preconditioning has also been applied in large-scale training runs, such as Gemini-1.5 Flash \citep{gemini15}.

The success of Shampoo has drawn increasing attention from the deep learning community. Several works have explored ways to scale Shampoo by improving its memory and compute efficiency \citep{4bitshampoo, anil2020scalable, distributedshampoo}. Other research~\citep{whyshampoo} has examined the theoretical foundations of Shampoo and proposed minor adjustments (such as using power $1/2$ rather than $1/4$) that align with prior empirical findings~\citep{anil2020scalable}. Moreover, \citet{whyshampoo} also showed that Shampoo with the aforementioned modifications is close to the optimal Kronecker approximation of the Adagrad \citep{adagrad} optimizer.

Our first contribution is demonstrating that the variant of Shampoo proposed by \citet{whyshampoo} is equivalent\footnote{Given this connection, the results of~\citet{whyshampoo} can be interpreted as showing that the eigenbasis provided by Shampoo's preconditioner is close to the ``optimal'' basis for running Adafactor.} to running Adafactor \citep{adafactor, zhai} in the eigenbasis provided by Shampoo's preconditioner. This interpretation of Shampoo connects it to a broader family of methods (e.g.~\citep{ekfac}) that design second-order algorithms by running a first-order method in the eigenbasis provided by a second-order method. Building on this insight, we can explore a broader design space for combining first and second order methods. Many of our design choices are a synthesis of conceptual ideas from prior works of~\citet{ekfac,anil2020scalable,whyshampoo} as well as implementation ideas from works of~\citet{4bitshampoo,galore}.

Explicitly, we study SOAP (\textbf{S}hampo\textbf{O} with \textbf{A}dam in the \textbf{P}reconditioner's eigenbasis) an algorithm that runs AdamW in the eigenbasis provided by Shampoo.  Our main contributions are as follows:

\begin{itemize} 
\item We make a formal connection between the Shampoo and the Adafactor algorithm. This insight leads us to consider the SOAP algorithm, which runs AdamW in the preconditioned space provided by Shampoo. 
\item SOAP outperforms both Shampoo and Adam in language model pre-training tasks with model sizes 360m and 660m, even after extensive hyperparameter tuning of Shampoo. 
\item SOAP reduces the number of hyperparameters compared to Shampoo, resulting in only one additional hyperparameter compared to AdamW: preconditioning frequency. 
\item SOAP demonstrates greater robustness to large preconditioning frequency compared to Shampoo on language model pre-training tasks.
\end{itemize}

We should also note that while similar algorithmic variants have been discussed in the literature (e.g. see the appendix of~\citet{anil2020scalable}), we are the first to systematically evaluate it.

\textbf{Organization:} In~\Cref{sec:related}, we discuss related works. In~\Cref{sec:alg}, we start by showing an equivalence between Shampoo (with exponent 1/2) and running Adafactor in the eigenspace given by Shampoo, then with this equivalence as the starting point we describe SOAP. In~\Cref{sec:methods}, we provide our experimental methodology and in~\Cref{sec:language}, we compare the performance of AdamW, Shampoo and SOAP on language modeling tasks. In~\Cref{app:adafactor,app:theory-overhead} we discuss the the space and time complexity of SOAP and how it can be improved. In~\Cref{sec:long} we show that efficiency benefits of SOAP over AdamW are maintained for longer duration runs where \#tokens = 100 $\times$ model size.

\begin{figure}
    \centering
    \includegraphics[width=1\linewidth]{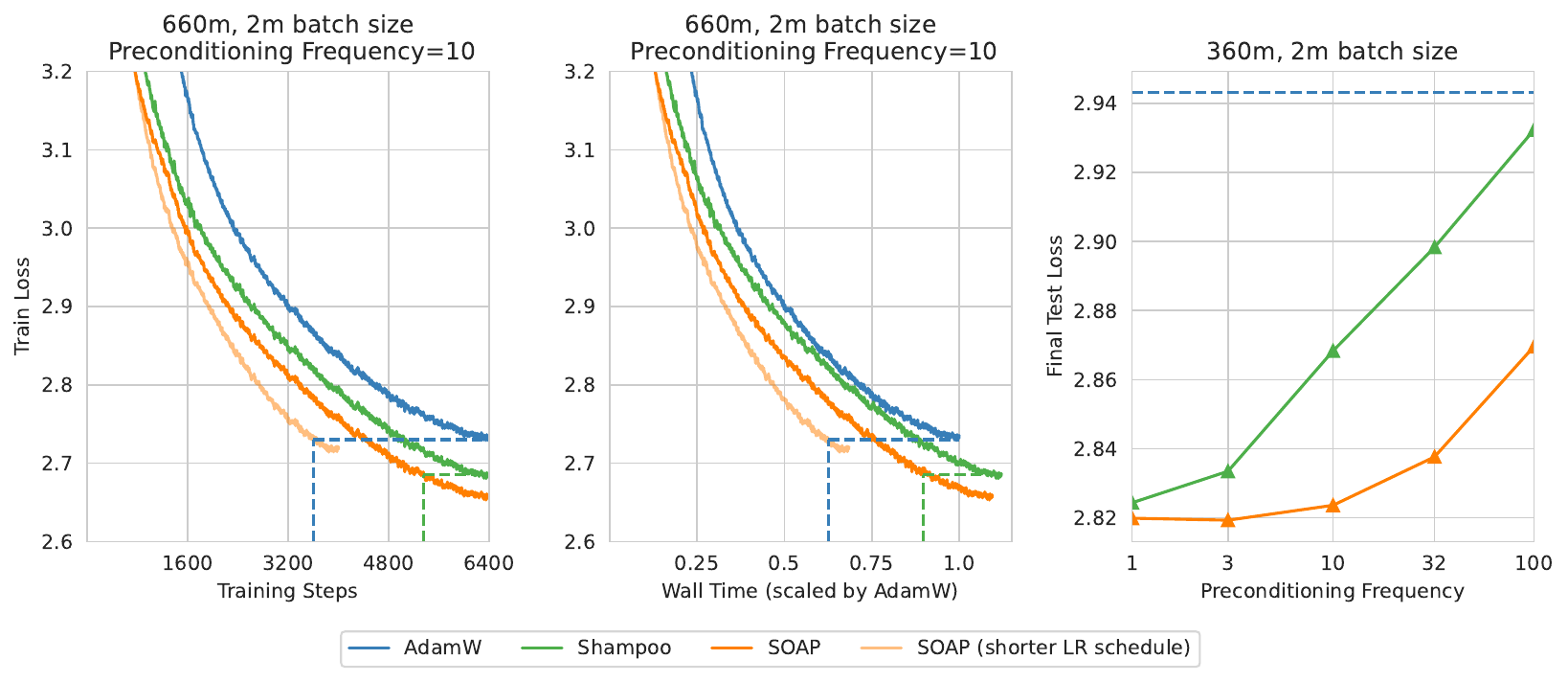}
    \caption{Comparing performance of tuned runs for AdamW, Shampoo (using DistributedShampoo~\citep{distributedshampoo} implementation) and SOAP. In left and middle figures, Shampoo and SOAP use a preconditioning frequency of 10. The "shorter LR schedule" plot is where we tuned the cosine decay so as to achieve the same terminal performance as AdamW. There we observe a $\geq 40\%$ reduction in the number of iterations and a $\geq 35\%$ reduction in wall clock time compared to AdamW, and approximately a 20\% reduction in both metrics compared to Shampoo. In the right figure we ablate preconditioning frequency and observe a slower degradation of performance of SOAP as compared to Shampoo. See~\Cref{sec:language} for a discussion of experimental results and ablation of batch size and~\Cref{sec:methods} for experimental methodology.}
    \label{fig:main}
\end{figure}

\section{Notation and Background}
We denote the weight matrix of a neural network layer by $W \in \mathbb{R}^{m \times n}$, and the 
corresponding gradient by $G \in \mathbb{R}^{m \times n}$. At a given time step $t$, these are denoted 
as $W_t$ and $G_t$, respectively. For a batch of inputs at time $t$, denoted 
by $B_t$, the loss and its gradient evaluated at $W_t$ are represented as 
$\phi_{B_t}(W_t)$ and $\nabla_W \phi_{B_t}(W_t)$, respectively.

Adagrad \citep{adagrad} is an online learning second-order algorithm that maintains a preconditioner $H \in \mathbb{R}^{mn \times mn}$. If the vectorized gradient at time $t$ is denoted by $g_t$ (i.e., $g_t = \text{vec}(G_t) \in \mathbb{R}^{mn}$), then the update of the preconditioner and the vectorized weights $w_t \in \mathbb{R}^{mn}$ with learning rate $\eta$ is given by

\[ H_t = H_{t-1} + g_t g_t^\top; \quad w_t = w_{t-1} - \eta H_t^{-1/2} g_t \]

Adam \citep{adam}, a widely used first-order optimization algorithm in 
deep learning is a diagonal approximation of Adagrad. It maintains an exponential moving average of the gradients 
$G_t$ (denoted as $M_t$) and of element-wise squared gradients $G_t^2$ 
(denoted as $V_t$) for a given weight matrix $W$. Its update rule with 
learning rate $\eta$ is given by
\[
W_t \leftarrow W_{t-1} - \eta \frac{M_t}{\sqrt{V_t}},
\]
where the division is performed element-wise.

Adafactor \citep{adafactor, zhai}, a variant of Adam, replaces $V_t$ with 
its best rank-1 approximation $V_t'$ to reduce memory usage. While the 
original Adafactor paper \citep{adafactor} proposed additional modifications, 
such as changes to the learning rate schedule, we focus on the version of 
Adafactor proposed in recent works \citep{zhai, zhaoscience}, whose update 
with learning rate $\eta$ is given by
\[
W_t \leftarrow W_{t-1} - \eta \frac{M_t}{\sqrt{V_t'}}.
\]

Shampoo \citep{shampoo} is a second-order optimization algorithm that approximates Adagrad and maintains two preconditioners, $L_t \in \mathbb{R}^{m \times m}$ and $R_t \in \mathbb{R}^{n \times n}$, for a given weight matrix $W \in 
\mathbb{R}^{m \times n}$. The updates for the preconditioners and the 
weights with learning rate $\eta$ are as follows:
\[
L_t \leftarrow L_{t-1} + G_tG_t^T; \quad R_t \leftarrow R_{t-1} + G_t^TG_t; 
\quad W_t \leftarrow W_{t-1} - \eta L_t^{-1/4} G_t R_t^{-1/4}.
\]

In practice, Shampoo is implemented with several other modifications such as layerwise learning rate grafting and exponents other than $-1/4$. We will use the DistributedShampoo~\citep{distributedshampoo} implementation which has these variations available as hyperparameters.

\section{Related Work}
\label{sec:related}
We begin by discussing works that are closely related, including \citet{ekfac, 
anil2020scalable} and \citet{galore}. Subsequently, we  cover extended related works.

\textbf{KFAC}~\citep{martens15} is a well-known second-order optimization algorithm designed 
for neural networks. E-KFAC \citep{ekfac} builds upon KFAC in a manner analogous to 
our extension of Shampoo, introducing a diagonal preconditioner that is updated 
between KFAC inversion steps. However, E-KFAC's algorithm is not identical to running 
Adam in KFAC’s eigenbasis, as the diagonal preconditioner is not Adam.

\citet{anil2020scalable} introduced several algorithmic and numerical improvements to 
develop a practical and scalable version of Shampoo \citep{shampoo}. Notably, they 
empirically found that using an exponent of $1/2$ outperforms the original exponent 
of $1/4$ in Shampoo. Of particular interest to our work is Appendix B of 
\citet{anil2020scalable}, where, inspired by E-KFAC, they describe an algorithm that 
is essentially equivalent to SOAP for 2D layers. However, no experiments were 
provided, and the authors claimed that unpublished experiments showed no empirical 
improvement over Shampoo. This discrepancy between our findings may be due to some of the implementation details of SOAP.

\textbf{GaLore}~\citep{galore} was recently proposed as a method to reduce Adam's memory footprint by 
maintaining momentum in a low-rank subspace derived from the singular value 
decomposition (SVD) of the gradients. Their algorithm’s full-rank version bears 
similarity to ours, with some notable distinctions. Firstly, their projection subspace is 
determined by the SVD of the current gradient, while we maintain an exponential moving 
average of $GG^T$ and $G^TG$. Secondly, we retain momentum in the original space and 
project it onto the preconditioned space, whereas they maintain it in the preconditioned 
space and do not rotate it each time the preconditioned space is updated. In~\Cref{app:galore}, we study GaLore's performance and find that our modifications are necessary for improving upon Shampoo. Moreover, their method 
only projects one side of a layer using the eigenbasis while using the identity basis on the other side. We examine the impact of one-sided 
projection for SOAP in~\Cref{sec:one-sided}.

\textbf{Diagonal Preconditioning based Optimizers:} Other than AdamW, there are other optimizers which involve diagonal preconditoning such as Lion~\citep{lion}, Sophia~\citep{liu2024sophia}, and Adafactor~\citep{adafactor}. Recent works of~\citet{KaddourKNMK23,zhaoscience} showed that these optimizers perform comparably to AdamW for LLM pretraining but do not surpass it. This suggests the need to explore non-diagonal preconditioners. We discuss prior works on non-diagonal preconditioners below.

\textbf{Second-Order Optimization:} Research on second-order optimization in deep 
learning is generally divided into two categories: Hessian-free methods and methods 
that estimate the Hessian.

\textbf{Hessian-Free Methods:} Hessian-free approaches \citep{martens10, martens15} 
optimize without explicitly computing the Hessian matrix, instead employing iterative 
techniques to approximate the Newton step. Other recent works \citep{li18psgd, 
li2024stochastichessianfittingslie, pooladzandi2024curvatureinformed} have focused on 
designing iterative preconditioners to improve the convergence specifically for stochastic
optimization algorithms.

\textbf{Hessian Estimation Methods:} These methods maintain an efficient 
approximation of the Hessian for neural networks. KFAC \citep{martens15} and Shampoo 
\citep{shampoo} are two widely recognized methods in this area.

KFAC \citep{martens15} was one of the first approaches to go beyond diagonal 
preconditioners in neural networks, demonstrating that a layer-wise Kronecker-factored 
preconditioner approximates the layer-wise Hessian in multi-layer perceptrons (MLPs). 
Subsequent works \citep{martens2018kroneckerrecurrent, kazuki19} extended KFAC to other 
architectures. Recent research \citep{ekfac, Gao_Liu_Huang_Wang_Wang_Xu_Yu_2021} has 
further improved trace and diagonal estimates for KFAC. Efforts to scale up KFAC 
\citep{ba2017distributed, Puiu22, puiu2023brandnewkfacsspeeding, eschenhagen2023kroneckerfactored} 
have focused on making the inversion step more efficient or enhancing distributed 
implementations.

Shampoo \citep{shampoo}, another second-order optimization algorithm, is motivated by 
the online learning algorithm Adagrad \citep{JMLR:v12:duchi11a}. Shampoo also employs 
a layer-wise Kronecker-factored preconditioner. A recent distributed implementation of 
Shampoo \citep{distributedshampoo} won an optimization efficiency benchmark 
\citep{dahl2023benchmarking}, highlighting the practical utility of second-order methods 
in deep learning. Few recent works \citep{caspr, whyshampoo} have provided theoretical advancements on top of Shampoo. Other works \citep{anil2020scalable, peirson2022fishy, wulin2024, 
4bitshampoo} have proposed various strategies to improve Shampoo’s scalability. We defer 
a comparison of SOAP with these methods to future work.

\section{Algorithm}
\label{sec:alg}

\subsection{Theory}

We begin by describing an equivalence between Shampoo and running Adafactor in the eigenbasis of the Shampoo preconditioner. For simplicity we omit momentum but the equivalence also holds with momentum. For this equivalence we  use Shampoo with the following modifications from the original Shampoo optimizer~\citep{shampoo}:
\begin{enumerate}
    \item We  use power $1/2$ instead of power $1/4$. This was already recommended in practical implementations~\citep{anil2020scalable,distributedshampoo} and a theoretical connection between optimal Kronecker approximation of Adagrad \citep{adagrad} preconditioner and Shampoo with power $1/2$ was established in~\cite{whyshampoo}. 
    \item  We also use the scalar correction to per layer learning rates described in \citet{yi21,whyshampoo}.
    \item Instead of the running average of $L$ and $R$ across time steps, we use dataset averages.
\end{enumerate}

With these changes in place (first occurrence of these changes is highlighted in \red{red} in the algorithm below) we formally define the two algorithms whose equivalence we  show in Algorithms \ref{alg:shampoo_ideal} and \ref{alg:shampoo_adafactor}.

\begin{algorithm}[H]

\begin{algorithmic}[1]
    \STATE Sample batch $B_t$.
    \STATE $G_{t} \in \mathbb{R}^{m \times n} \gets -\nabla_W \phi_{B_t}(W_t)$
    \STATE $L \gets \red{\E_{B}}[G_{B}G_{B}^T]$ \COMMENT{Where the expectation is over a random batch $B$.}
    \STATE $R \gets \red{\E_{B}}[G_{B}^TG_{B}]$
    \STATE $\hat{H} \gets L \otimes R / \red{\text{Trace}(L)}$
    \STATE $W_t \gets W_{t-1} - \eta \hat{H}^{\red{-1/2}}G_{t} = W_{t-1} - \eta L^{-1/2}G_{t}R^{-1/2}/ \text{Trace}(L)^{-1/2}$
\end{algorithmic}
\caption{Single step of idealized Shampoo with power $1/2$.}
\label{alg:shampoo_ideal}
\end{algorithm}

\begin{algorithm}[H]

\begin{algorithmic}[1]
    \STATE Sample batch $B_t$.
    \STATE $G_{t} \in \mathbb{R}^{m \times n} \gets -\nabla_W \phi_{B_t}(W_t)$
    \STATE $L \gets \E_{B}[G_{B}G_{B}^T]$
    \STATE $R \gets \E_{B}[G_{B}^TG_{B}]$
    \STATE $Q_L \gets \texttt{Eigenvectors}(L)$
    \STATE $Q_R \gets \texttt{Eigenvectors}(R)$
    \STATE $G'_{t} \gets Q_L^T G_{t} Q_R$
    \STATE \COMMENT{Idealized version of code for Adafactor taking $G'_{t}$ to be the gradient}
    \STATE $G'_{B} \gets Q_L^T G_{B} Q_R$ 
    \STATE $A = \E_{B}[G'_{B} \odot G'_{B}] \mathbf{1}_m$ where $G'_{B} = Q_L^T G_{B} Q_R$
    \STATE $C = \mathbf{1}_n^\top \E_{B}[G'_{B} \odot G'_{B}]$
    \STATE $\hat{V}_t = \frac{A C^T}{\mathbf{1}_n^\top A}$ \COMMENT{Elementwise division}
    \STATE $G''_{t} \gets \frac{G'_t}{\sqrt{\hat{V}_t} + \epsilon}$ \COMMENT{Elementwise division and square root}
    \STATE $G'''_{t} \gets Q_L G''_{t} Q_R^T$ \COMMENT{Projecting back to original space}
    \STATE $W_t \gets W_{t-1} - \eta G'''_{t}$
\end{algorithmic}
\caption{Single step of idealized Adafactor in Shampoo's eigenspace.}
\label{alg:shampoo_adafactor}
\end{algorithm}

\begin{claim}
\label{claim:eq}
\Cref{alg:shampoo_ideal,alg:shampoo_adafactor} are equivalent. 
\end{claim}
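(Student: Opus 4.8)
The plan is to write down the weight-update direction produced by each algorithm in closed form and check that the two directions coincide. Throughout I take the idealized $\epsilon \to 0$ regime (consistent with the ``idealized'' qualifiers in the algorithm names) and assume $L$ and $R$ are symmetric positive definite, so that $L^{-1/2}$ and $R^{-1/2}$ make sense.

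\textbf{Update of \Cref{alg:shampoo_ideal}.} Since $L,R$ are symmetric positive definite, $(L\otimes R)^{-1/2} = L^{-1/2}\otimes R^{-1/2}$, and the standard identity relating Kronecker products to vectorization gives $\hat H^{-1/2}\,\mathrm{vec}(G_t) = \mathrm{Trace}(L)^{1/2}\,\mathrm{vec}(L^{-1/2} G_t R^{-1/2})$; this is the matrix form already recorded on the last line of \Cref{alg:shampoo_ideal}, so the update direction there is $D_1 := \mathrm{Trace}(L)^{1/2}\,L^{-1/2} G_t R^{-1/2}$.

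\textbf{Update of \Cref{alg:shampoo_adafactor}.} Diagonalize $L = Q_L \Lambda_L Q_L^\top$ and $R = Q_R \Lambda_R Q_R^\top$ with $Q_L, Q_R$ orthogonal and $\Lambda_L, \Lambda_R$ diagonal; these $Q_L, Q_R$ are the matrices of eigenvectors computed in the algorithm. The crux is to identify the rank-one matrix $\hat V_t$. I claim that the $i$-th row sum of $V := \E_B[G'_B \odot G'_B]$ equals $(\Lambda_L)_{ii}$ and the $j$-th column sum equals $(\Lambda_R)_{jj}$. Writing $q_{L,i}$ and $q_{R,j}$ for the columns of $Q_L$ and $Q_R$, the $i$-th row sum is
\[
\sum_j \E_B\big[(q_{L,i}^\top G_B q_{R,j})^2\big] = \E_B\Big[\sum_j (q_{L,i}^\top G_B q_{R,j})^2\Big] = \E_B\big[\|G_B^\top q_{L,i}\|_2^2\big] = q_{L,i}^\top\, \E_B[G_B G_B^\top]\, q_{L,i} = (\Lambda_L)_{ii},
\]
where the second equality is Parseval's identity for the orthonormal basis $\{q_{R,j}\}_j$ and the last uses $L = \E_B[G_B G_B^\top]$; the column-sum claim follows symmetrically from $R = \E_B[G_B^\top G_B]$. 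Hence $A = \mathrm{diag}(\Lambda_L)$, $C = \mathrm{diag}(\Lambda_R)$, and $\mathbf 1^\top A = \mathrm{Trace}(L) = \mathrm{Trace}(R)$, so with $\epsilon = 0$ the Adafactor reconstruction is $(\hat V_t)_{ij} = (\Lambda_L)_{ii}(\Lambda_R)_{jj}/\mathrm{Trace}(L)$. We never need $V$ itself to be rank one --- only its row and column marginals enter $\hat V_t$.

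\textbf{Matching the two.} With this $\hat V_t$, the elementwise division $G''_t = G'_t/\sqrt{\hat V_t}$ becomes $G''_t = \mathrm{Trace}(L)^{1/2}\,\Lambda_L^{-1/2} G'_t \Lambda_R^{-1/2}$; substituting $G'_t = Q_L^\top G_t Q_R$ and projecting back via $G'''_t = Q_L G''_t Q_R^\top$ gives
\[
G'''_t = \mathrm{Trace}(L)^{1/2}\,(Q_L \Lambda_L^{-1/2} Q_L^\top)\, G_t\, (Q_R \Lambda_R^{-1/2} Q_R^\top) = \mathrm{Trace}(L)^{1/2}\, L^{-1/2} G_t R^{-1/2} = D_1,
\]
so the weight update $W_t \gets W_{t-1} - \eta G'''_t$ is identical to that of \Cref{alg:shampoo_ideal}. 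Since $Q_L\Lambda_L^{-1/2} Q_L^\top = L^{-1/2}$ does not depend on the choice of eigenvectors, the argument is unaffected by repeated eigenvalues. I expect the marginal computation identifying $A$ and $C$ with the spectra of $L$ and $R$ to be the only real content; the rest is bookkeeping with the vectorization identity and the orthogonality of $Q_L$ and $Q_R$.
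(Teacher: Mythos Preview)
Your proof is correct and follows essentially the same approach as the paper: both arguments reduce the claim to showing that the row sums $A_i$ of $\E_B[G'_B\odot G'_B]$ coincide with the eigenvalues of $L$ (and column sums with those of $R$), via the orthonormality of the $Q_R$-columns, and then read off the equality of the per-coordinate scalings. You are a bit more explicit than the paper in spelling out the projection-back step, the $\epsilon\to 0$ idealization, and the observation $\mathrm{Trace}(L)=\mathrm{Trace}(R)$, but the core computation is identical.
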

\begin{proof}

Consider $G_t$ in the basis created after rotating by $Q_L, Q_R$ i.e. $G'_t = Q_L^T G_t Q_R$. Let the eigenvalues of $\E_{B}[G_{B}G_{B}^T]$ and $\E_{B}[G_{B}^TG_{B}]$  be given by $\lambda_1,...,\lambda_m$ and $\mu_1,...,\mu_n$ respectively. Algorithm 1 scales the $i, j$ coordinate by $(\lambda_i \mu_j / (\sum_i \lambda_i))^{-1/2}$, while Algorithm 2 scales them by  $(A_i C_j / (\sum_i A_i))^{-1/2}$. We  now show that $A_i = \lambda_i$, an analogous argument shows $C_j = \mu_j$.

\begin{align*}
    A_i &= e_i^T \E_{B}[G'_{B} \odot G'_{B}] \mathbf{1}_m \\
    &= \E_{B}[ \sum_j (G'_{B})_{i, j}^2] \\
    &= \E_{B}[ \sum_j (u_i^T(G_{B})v_j)^2] \quad \quad (\text{Using definition of } G') \\
    &= \E_{B}[  ||u_i^T(G_{B})||^2] \quad \quad \quad \quad (v_j \text{ form a basis})\\
    &= \E_{B}[ u_i^TG_{B}G_{B}^Tu_i] \\
    &= \lambda_i \quad \quad \quad \quad \quad \quad \quad \quad \quad \quad (\text{By definition of } \lambda_i \text{ and } u_i)
\end{align*}

\end{proof}

While these two algorithms are equivalent in their idealized forms, practical considerations reveal some differences. Firstly, the algorithms differ when using running averages instead of dataset averages.
Secondly, and more significantly in practice, we do not invert or compute the eigenvector decomposition of $L$ and $R$ at every step. This means that the ``adaptivity'' of learning rates in Shampoo is limited\footnote{We note that practical implementations of Shampoo use grafting which allows for learning rate adaptivity at every step, but this adaptivity is restricted to a single scalar per layer.} to the updates of $L$ and $R$. In contrast, with Adafactor in Shampoo's eigenspace, the second moment estimates (i.e., $A$ and $C$ in Algorithm \ref{alg:shampoo_adafactor}) can be updated at every step as they are computationally inexpensive. Additionally, instead of using Adafactor, we can opt\footnote{Though using AdamW over Adafactor only gives very small improvements in performance, see~\Cref{fig:factor} and~\Cref{app:adafactor}. We also note that one can use any other diagonal preconditioner based optimizer in place of Adam, such as Lion~\citep{lion}, Sophia~\citep{liu2024sophia} or Schedule-Free AdamW~\citep{schedulefree}.} for Adam, which offers more generality. Combining these insights leads to~\Cref{alg:SOAP} which can be interpreted as running Adam in Shampoo's eigenspace.

\begin{algorithm}[t]
	
	\begin{algorithmic}[1]
		\STATE Sample batch $B_t$.
		\STATE $G \in \mathbb{R}^{m \times n} \gets -\nabla_W \phi_{B_t}(W_t)$
		\STATE $G' \gets Q_L^T G Q_R$
		\STATE $M \gets \beta_1 M + (1-\beta_1) G$
		\STATE $M' \gets Q_L^T M Q_R$
		\STATE \COMMENT{Now we  ``run'' Adam on $G'$}
		\STATE $V \gets \beta_2 V + (1-\beta_2) (G' \odot G')$ \COMMENT{Elementwise multiplication}
		\STATE $N' \gets \frac{M'}{\sqrt{\hat{V}_t} + \epsilon}$ \COMMENT{Elementwise division and square root}
		\STATE \COMMENT{Now that we have preconditioned by Adam in the rotated space, we go back to the original space.}
		\STATE $N \gets Q_L N' Q_R^T$ 
		\STATE $W \gets W - \eta N$
		\STATE \COMMENT{End of gradient step, we now update $L$ and $R$ and possibly also $Q_L$ and $Q_R$. }
		\STATE $L \gets \beta_2  L + (1-\beta_2) GG^T$
		\STATE $R \gets \beta_2  R + (1-\beta_2) G^TG$
		\IF{t \% f == 0}
		\STATE $Q_L \gets \texttt{Eigenvectors}(L, Q_L)$
		\STATE $Q_R \gets \texttt{Eigenvectors}(R, Q_R)$
		\ENDIF
	\end{algorithmic}
	\caption{Single step of SOAP for a $m \times n$ layer. Per layer, we  maintain four matrices: $L \in \mathbb{R}^{m \times m}, R \in \mathbb{R}^{n \times n}$ and $V, M \in \mathbb{R}^{m \times n}$. For simplicity we  ignore the initialization and other boundary effects such as bias correction. Hyperparameters: Learning rate $\eta$, $\text{betas} = (\beta_1, \beta_2)$, epsilon $\eps$, and preconditioning frequency $f$.\\ An implementation of SOAP is available at \url{https://github.com/nikhilvyas/SOAP/tree/main}.}%
	\label{alg:SOAP}
\end{algorithm}

\begin{algorithm}[t]
	
	\begin{algorithmic}[1]
		\STATE $S \gets PQ$
            \STATE $Q \gets \texttt{QR}(S)$
	\end{algorithmic}
	\caption{\texttt{Eigenvectors} function, implemented using power iteration and QR decomposition. Inputs: PSD matrix $P$ and estimate of eigenvectors $Q$. If the estimate was exact we would have $P = Q D Q^T$ where $D$ is the diagonal matrix with eigenvalues.}
	\label{alg:eigenvectors}
\end{algorithm}

We now describe some additional implementation details:
\begin{enumerate}
	\item ~\Cref{alg:SOAP} describes the behavior of the algorithm for 2D layers. Following~\citet{galore}, for 1D layers we run standard AdamW. This reduces the overhead as compared to standard implementations of Shampoo which solve an eigenvector problem for 1D layers too. 
	\item Following~\citet{4bitshampoo}, we compute eigenvectors of $L$ (and $R$) using one step of power method (\Cref{alg:eigenvectors}). This requires doing one matrix multiplication followed by QR decomposition. QR decomposition is faster~\citep{eigh-vs-qr} than standard eigenvector decomposition in PyTorch. For the first iteration, eigenvectors are initialized by doing a standard eigenvector decomposition.  %
	\item For layers with huge dimensions such as the first and last layer in language modeling transformers, maintaining the eigenvectors would be space and time prohibitive. For such dimensions we fix the rotation matrix ($Q_L$ or $Q_R$) to be identity. Note that if we fix both $Q_L$ and $Q_R$ to be identity for a 2D layer, we would recover Adam.
        \item ~\Cref{alg:SOAP} omits bias correction and weight decay for simplicity, but these are used in the actual implementation, identical to their use in AdamW.
\end{enumerate}

The main focus of the next sections will be to explore the empirical performance of this algorithm and its variations. n~\Cref{app:adafactor,app:theory-overhead} we discuss the the space and time complexity of SOAP and how it can be improved.

\section{Experimental Methodology}
\label{sec:methods}

\textbf{Hyperparameter tuning:} We begin with hyperparameter values suggested by prior research for both AdamW and Distributed Shampoo (e.g., $\beta_2 = 0.95$). Initially, we conduct a learning rate sweep to determine the optimal learning rate for each optimizer. Once the optimal learning rate is identified, we perform two-dimensional sweeps for each of the remaining hyperparameters, where we vary the selected hyperparameter alongside the learning rate. The purpose of these sweeps is to demonstrate that our default hyperparameter settings are near-optimal, disregarding potential interactions between two non-learning-rate hyperparameters. A detailed discussion of the hyperparameter sweeps is provided in~\Cref{app:setup}.
\begin{figure}[ht]
    \centering
    \includegraphics[width=1\linewidth]{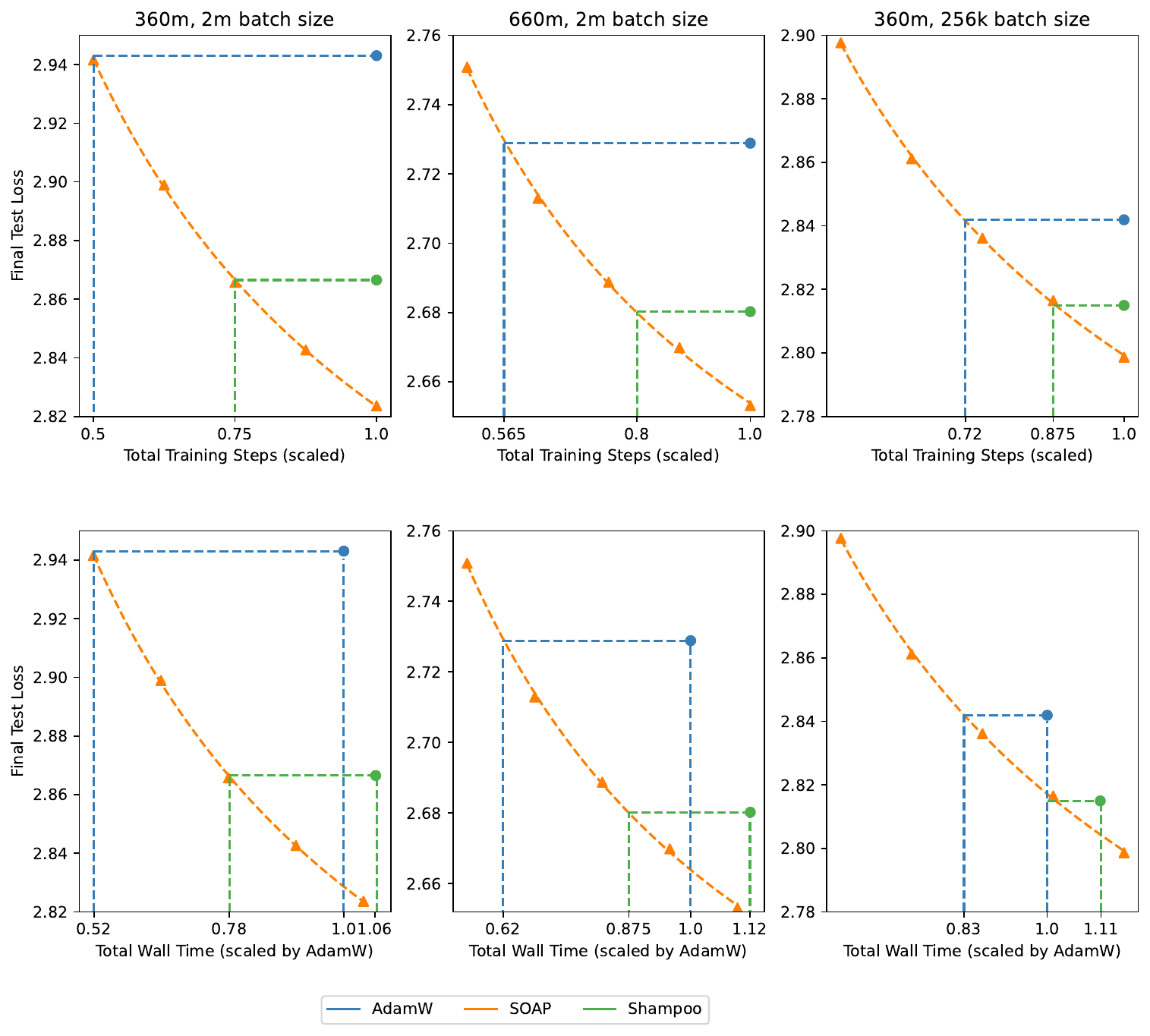}
    \caption{Precise efficiency benefits of SOAP over AdamW and Shampoo for 360m (at 256k and 2m batch size) and 660m (at 2m batch size) model. For the precise methodology, refer to~\Cref{sec:methods}.}
    \label{fig:prec_overhead}
\end{figure}

\textbf{Throughput Measurement:} We evaluate the throughput of each optimizer by measuring the number of tokens processed per second. At present, we perform these measurements on a single H100 GPU and utilize gradient accumulation to accommodate large batch sizes. While this approach may seem to disadvantage AdamW--- as the overhead of Shampoo/SOAP is compared against multiple gradient accumulation steps--- it is important to note that the overhead of Shampoo/SOAP can be amortized across layers by distributing the updates across multiple GPUs. This technique is employed in the distributed implementation of Shampoo \citep{distributedshampoo}. A comprehensive comparison of distributed implementations of these algorithms is left to future work.

\textbf{Efficiency Benefits:} Simply running SOAP for the same duration as Shampoo and AdamW cannot be directly used to calculate the efficiency benefit (in terms of training steps or wall-clock time) of using SOAP since we use a cosine schedule. Therefore, we run SOAP on $.5, .625, .75$ and $.875$ fraction of the training data and fit a scaling law of the form $a + bN^{-\beta}$ through the final losses obtained, where $N$ represents the number of training points and $a,b,\beta$ are the parameters of the fit. We show these points and the corresponding scaling laws obtained in~\Cref{fig:prec_overhead}. This scaling law is then used to calculate the efficiency benefit in terms of training steps and wallclock time as shown in~\Cref{fig:prec_overhead}. Here, the horizontal lines represent the final losses of AdamW and Shampoo. 

\section{Language Modeling Experiments}
\label{sec:language}

In this section we focus on empirically comparing AdamW, DistributedShampoo, and SOAP on language modeling tasks. 
\begin{figure}[ht]
    \centering
    \includegraphics[width=1\linewidth]{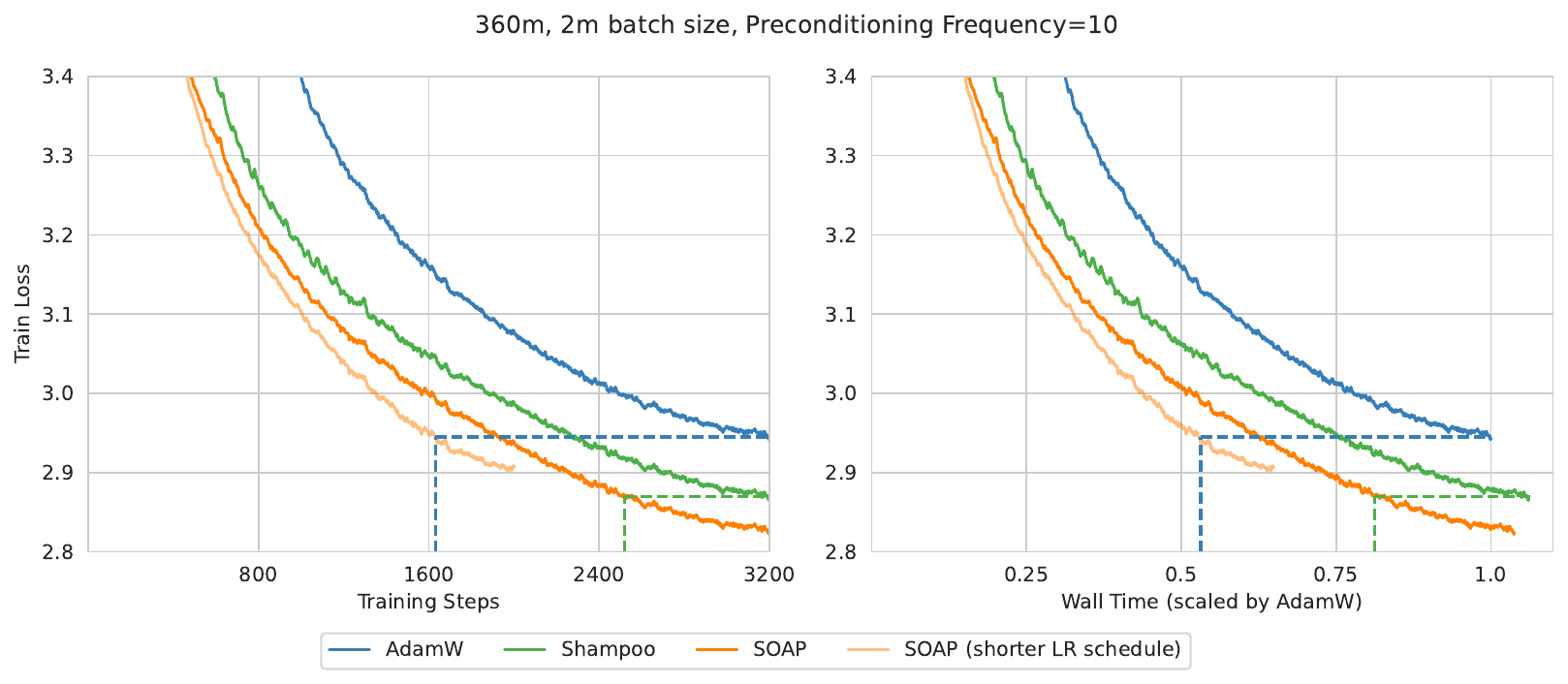}
    \caption{Comparing performance of tuned runs for AdamW, Shampoo (using DistributedShampoo~\citep{distributedshampoo} implementation) and SOAP. Shampoo and SOAP use preconditioning frequency of 10. We observe a $\geq 40\%$ reduction in the number of iterations and a $\geq 35\%$ reduction in wall clock time compared to AdamW, and approximately a $20\%$ reduction in both metrics compared to Shampoo. See ~\Cref{fig:main} for 660m results, ~\Cref{sec:freq,sec:critical} for ablations of preconditioning frequency and batch size respectively, and~\Cref{sec:methods} for detailed calculation of efficiency improvement and experimental methodology.}
    \label{fig:main360}
\end{figure}
\subsection{Measuring Efficiency Benefits}

In~\Cref{fig:main} (left and middle) and~\Cref{fig:main360} we show train loss curves for AdamW, Shampoo, and SOAP on 360m and 660m models with 2m token batch size and ``chinchilla-optimal'' i.e. 20x model size number of tokens. In these plots we observe that SOAP outperforms the other two optimizers. To directly calculate the efficiency benefit of SOAP, we also run SOAP with cosine decay for a shorter lr schedule, as shown in~\Cref{fig:main,fig:main360}. This allows us to approximate the following efficiency benefits (when batch size is set to 2m and preconditioning frequency to 10): $\geq 40\%$ reduction in the number of iterations and $\geq 35\%$ reduction in wall clock time compared to AdamW; $\approx 20\%$ reduction in iterations and wall clock time as compared to Shampoo. Precise efficiency benefit calculations are presented in ~\Cref{fig:prec_overhead}(left and middle). In~\Cref{sec:long} we show that efficiency benefits of SOAP over AdamW are maintained for longer duration runs where \#tokens = 100 $\times$ model size.

\subsection{Effect of Frequency of Finding Eigenvectors/Inverse}
\label{sec:freq}

In~\Cref{fig:main} (right), we compare SOAP and Shampoo with respect to preconditioning frequency. We observe the following: 
\begin{itemize}
    \item For all frequencies we tried from 1 to 100, both optimizers outperform AdamW.
    \item At frequency 1, SOAP and Shampoo are quite close in performance.
    \item At higher frequencies, the performance of  both SOAP and Shampoo degrades but SOAP's performance degrades significantly slower than Shampoo's.
\end{itemize}

\begin{figure}[!h]
    \centering
    \includegraphics[width=1.0\linewidth]{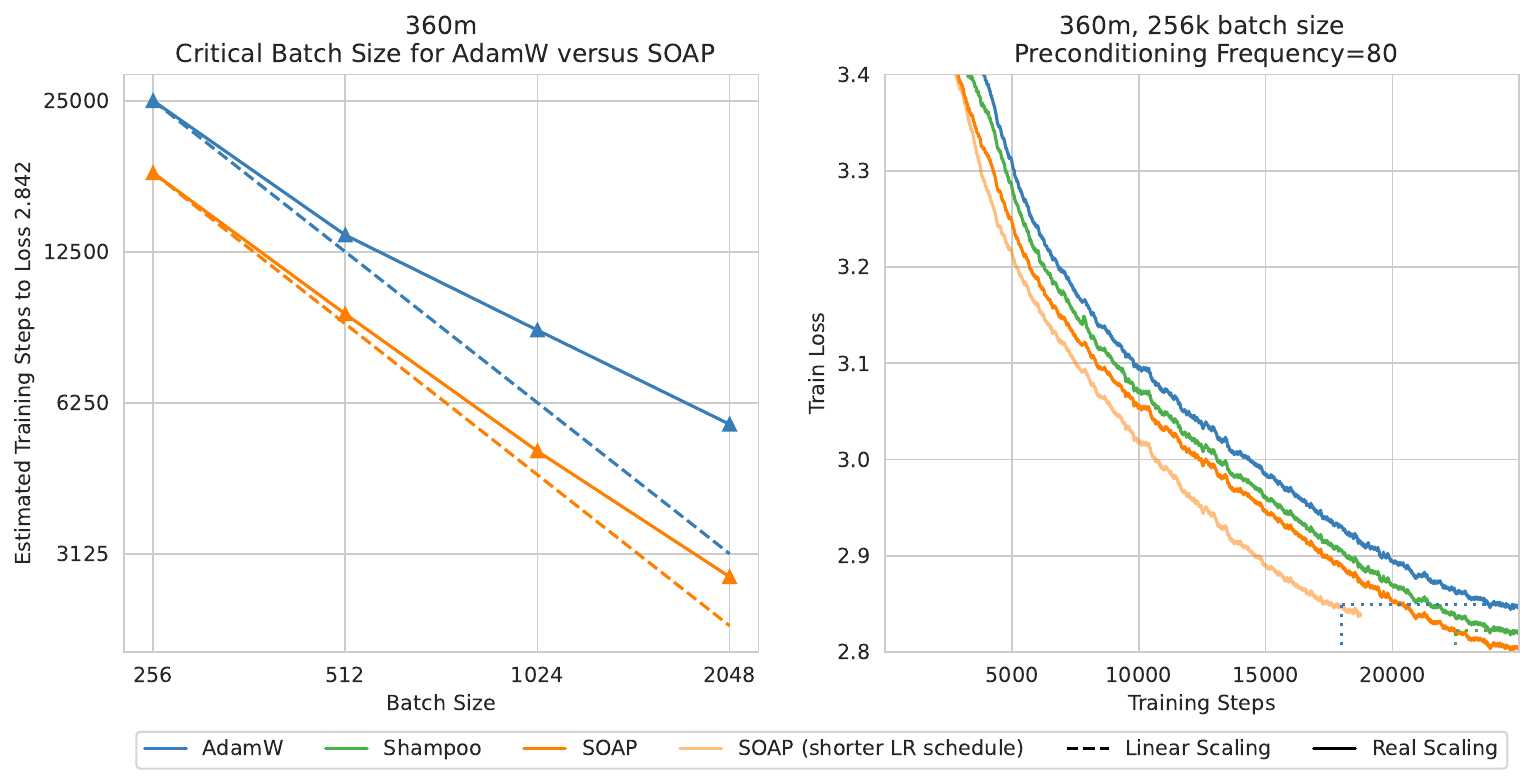}
    \caption{(left) Comparing the critical batch size of AdamW vs SOAP. We can see that SOAP improves the critical batch size, by being much closer to the ideal linear scaling with batch size as compared to AdamW. (right) Comparing performance of tuned runs for AdamW, Shampoo (using DistributedShampoo~\citep{distributedshampoo} implementation) and SOAP for token batch size of 256k. Shampoo and SOAP use preconditioning frequency of 80. We observe a $\geq 25\%$ reduction in the number of iterations compared to AdamW, and approximately a 10\% reduction compared to Shampoo. See~\Cref{fig:prec_overhead} (right) for wall-clock time improvement and ~\Cref{sec:methods} for detailed calculation of efficiency improvement.}
    \label{fig:bsz}
\end{figure}

\subsection{SOAP Improves the Critical Batch Size}
\label{sec:critical}

When scaling up batch sizes, the ideal outcome is that doubling the batch size results in halving the number of training steps needed to achieve the same performance. The batch size at which this ideal scaling starts to break down is referred to by~\citet{critical} as the \textit{critical batch size}. As models and datasets grow larger, it becomes increasingly important to develop optimizers that support larger critical batch sizes, thereby reducing the serial runtime of a training run. In this subsection, we compare the critical batch sizes of AdamW and SOAP. Relative to our baseline setup of a 2 million batch size, when we decrease the batch size by a factor of $k$, we increase the preconditioning frequency by the same factor. This ensures that the FLOPS and wall clock multiplicative overhead for the eigenvector decomposition steps remains consistent with the 2 million batch size setting.

We start by training a 360 million parameter model with a batch size of 256k for a "Chinchilla-optimal" number of tokens (20 times the model size) using AdamW, achieving a loss of 2.842. This value is set as the target loss for our comparisons. In~\Cref{fig:bsz} (left), we show the number of steps AdamW and SOAP require to reach this target loss as we vary the batch size. SOAP consistently requires fewer steps across all batch sizes, with the multiplicative benefits becoming more pronounced at larger batch sizes. Additionally, we compare these results to the ideal scenario (dashed line) of linear scaling, where doubling the batch size halves the number of steps. SOAP more closely follows the linear scaling trend compared to AdamW, indicating that it has a higher critical batch size in this setup.

In~\Cref{fig:bsz} (right), we present the optimal runs for each optimizer (including Shampoo) at the smallest batch size we consider: 256k. SOAP outperforms both Shampoo and AdamW, reducing the number of iterations by 25\% compared to AdamW, and by approximately 10\% compared to Shampoo. Furthermore, in~\Cref{fig:prec_overhead} (right, bottom), we demonstrate that SOAP also achieves a wall-clock time improvement of $\geq 15\%$ over AdamW and around 10\% over Shampoo. We note that these results are a preliminary analysis for smaller batch size runs. Our approach of keeping the product of batch size and preconditioning frequency constant may not be optimal, and a better trade-off could likely be found. Furthermore, SOAP’s overhead could potentially be reduced by performing $L$ and $R$ updates in lower precision (instead of fp32). Finally, the diminished efficiency gains of second-order methods at smaller batch sizes are consistent with prior findings~\citep{nqm,ishikawa2024when}.

\subsection{Scaling to Larger Token Counts}
\label{sec:long}

Thus far, our focus has been on Chinchilla-optimal token counts for a given model size. However, in many practical scenarios, models are trained on significantly larger token budgets to optimize inference costs and downstream performance. In Figure~\ref{fig:long}, we demonstrate that SOAP maintains its advantage Adam even in extended training runs.

\begin{figure}[!h]
    \centering
    \includegraphics[width=1.0\linewidth]{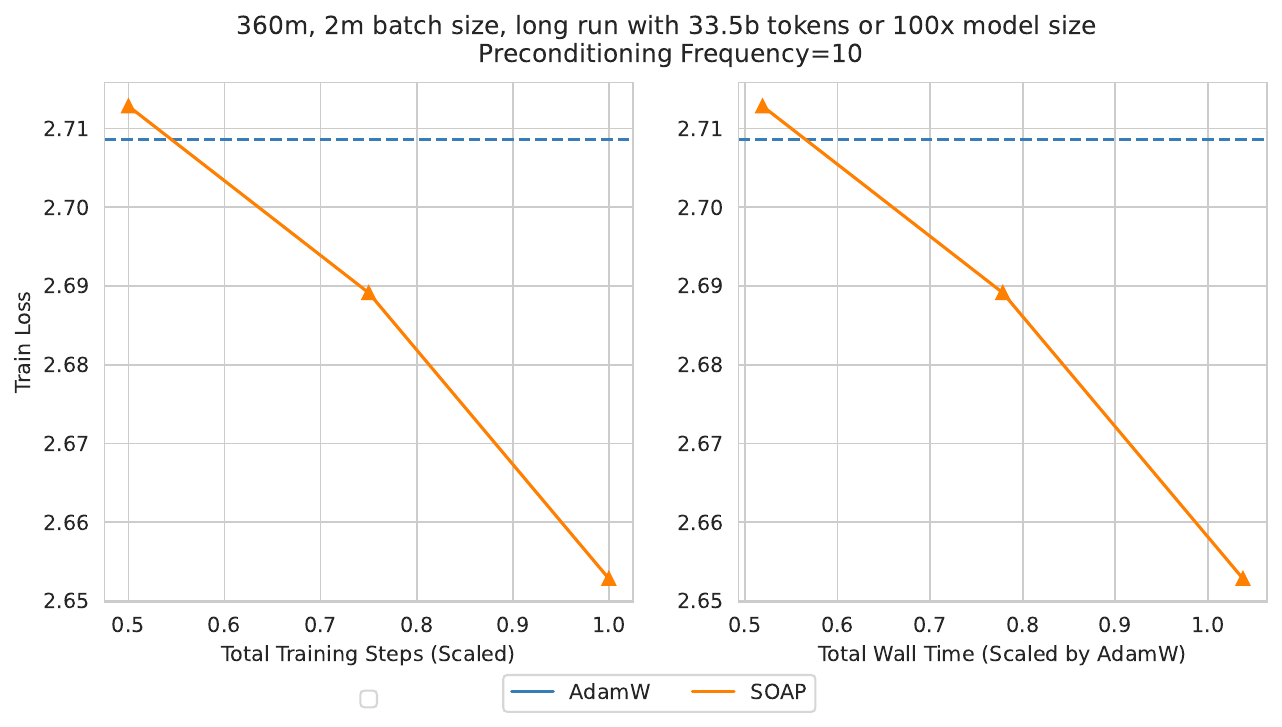}
    \caption{Performance comparison of SOAP and Adam for longer duration training runs.}
    \label{fig:long}
\end{figure}

\section{Further Efficiency Improvements}
\label{sec:efficiency}

In this section, we discuss space and time complexity of SOAP and provide an overview of potential avenues for further space and compute efficiency improvements in SOAP.

\subsection{One Sided Eigenbasis}

\label{sec:one-sided}
As described in~\Cref{sec:related}, \citet{galore} have an algorithm similar to ours. One of the differences is that they only project the smaller side of the layer using the eigenbasis while using identity as the rotation matrix for the larger side i.e. if $m < n$ we set $Q_R = I_n$ in~\Cref{alg:SOAP} and if $m > n$ we set $Q_L = I_m$. Doing this leads to a reduction in space usage as well as reduction of optimizer time overhead, which is discussed in~\Cref{app:impr-space,app:impr-time}.

In~\Cref{fig:factor}, it is evident that the one-sided projection results in slightly reduced performance compared to the original SOAP optimizer. However, it still performs on par with, or marginally better than, Shampoo, while maintaining greater computational efficiency. Further investigation into the potential for these variants to surpass the computational efficiency of original SOAP optimizer is left for future work.

\begin{figure}[!h]
    \centering
    \includegraphics[width=1\linewidth]{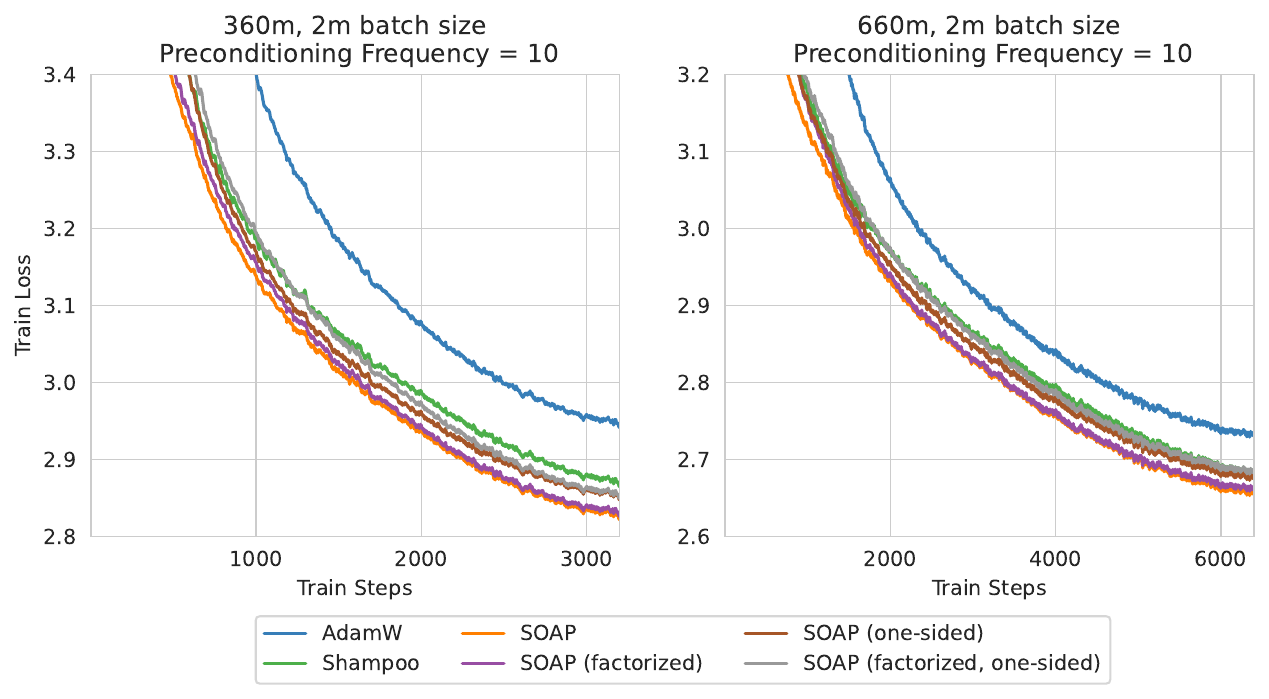}
    \caption{Performance of variants of SOAP which improve space usage or time overhead. 1. SOAP (factorized): Uses Adafactor instead of Adam in Shampoo's eigenbasis and 2. SOAP (one-sided): Uses $Q = I$ (i.e. no rotation) on the large side of weight matrix and 3. SOAP~(factorized,~one-sided): Combines both of these changes. We observe that while using Adafactor instead of Adam causes a negligible increase in loss, using the one-sided variant causes a larger increase. However, the one-sided variant also has much larger reduction in time and space overhead. For computational benefits of these variants see~\Cref{app:adafactor,app:theory-overhead}.}
    \label{fig:factor}
\end{figure}

\subsection{Space usage of SOAP}
\label{app:adafactor}

For a $m \times n$ matrix where $m > n$ we require $$2m^2\text{ (for $L, Q_L$)}+2n^2\text{ (for $R, Q_R$)}+3mn\text{ (for gradient, $M, V$)}$$ space usage\footnote{One $mn$ is for storing the gradients, this can be avoided (as long as there is no gradient accumulation) by applying gradients along with backprop~\citep{lomo} but this is not implemented by default in standard deep learning frameworks such as PyTorch. Hence we will include this term in all of our calculations.} (beyond weights and activations), specifically for $L, Q_L, R, Q_R, \text{momentum }(M)$, AdamW's second order estimate ($V$), and the gradient. This is the same space usage as DistributedShampoo while AdamW uses $3mn$.

\subsubsection{Improving space usage of SOAP}
\label{app:impr-space}

The most direct way to reduce memory is using low precision to store the $L, R, Q_L, Q_R, V$ matrices, which is done by~\citet{8bitadam,4bitshampoo}. Orthogonal to the low precision approaches, there are two algorithmic approaches to improving the space usage of SOAP: 
\begin{itemize}
    \item Using Adafactor instead of Adam as the diagonal preconditioner after rotating by $Q_L$ and $Q_R$. This reduces the space usage by $mn$.
    \item Using one sided version of SOAP (\Cref{sec:one-sided}). This reduces space usage from $2m^2+2n^2+3mn$ to $2 \min(m,n)^2 + 3mn$.
    \item Combining these approaches yields space usage of $2 \min(m,n)^2 + 2mn$.
\end{itemize} 
For standard transformer architectures the last variant which combines the two approaches would yield less space usage overall compared to AdamW (which uses $3mn$).

We try these approaches in Figure~\ref{fig:factor}. We observe that using Adafactor instead of AdamW yields very small reductions in performance while using one-sided preconditioner results in larger reductions. Nonetheless even after combining these two approaches the resulting optimizer outperforms AdamW while having a smaller space requirement than AdamW. Regarding space usage we also note that Adafactor (with momentum added back) itself utilizes only $2mn$ space usage and has been shown to perform comparable to AdamW for ViT training~\citep{zhai} and for language model training~\citep{zhaoscience}. Further space reduction beyond Adafactor has been studied in the Adalomo~\citep{adalomo}, GaLore~\citep{galore}, and AdaMeM~\citep{adamem} papers.

\subsection{Time Overhead of SOAP}
\label{app:theory-overhead}
There are two types of overhead of Shampoo and SOAP over AdamW: the overhead per step and the overhead when changing the preconditioner (or for SOAP, the preconditioner's eigenbasis). Let us first analyze the first one. For SOAP per step for a layer of size $m \times n$ we have an overhead of $$m^3\text{ (updating $L$)}+n^3\text{ (updating $R$)}+(2m^2n+2mn^2)\text{ (projecting and projecting back on both sides)}.$$

We note that this is more than the overhead of Shampoo which is $m^3+n^3+m^2n+n^2m$. This can be observed in~\Cref{fig:prec_overhead} (bottom, right) but not in the other figures since there the second type of overhead is the dominant term.

The second type of overhead is due to changing the preconditioner for Shampoo (or for SOAP, preconditioner's eigenbasis i.e. $Q_L$ and $Q_R$). The DistributedShampoo \citep{distributedshampoo} implementation of Shampoo uses a direct call to $\texttt{torch.linalg.eigh}$ for this. Following~\citet{4bitshampoo} we use~\Cref{alg:eigenvectors} which uses power iteration based approach which calls $\texttt{torch.linalg.qr}$. We note that $\texttt{torch.linalg.qr}$ is faster than $\texttt{torch.linalg.eigh}$~\citep{eigh-vs-qr}. In Figure~\ref{fig:linalg-runtime} (right) we see that using power iteration based approach ($\texttt{torch.linalg.qr}$) performs as well as fresh eigenvector decomposition ($\texttt{torch.linalg.eigh}$).

\label{app:linalg}
\begin{figure}[!h]
    \centering
    \includegraphics[width=.45\linewidth]{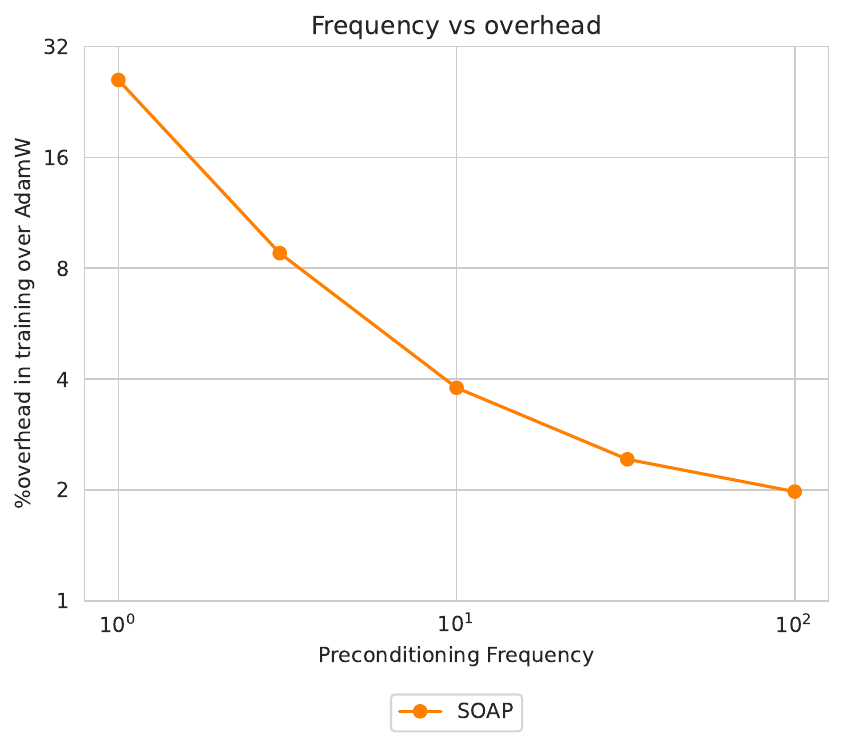}
    \includegraphics[width=.45\linewidth]{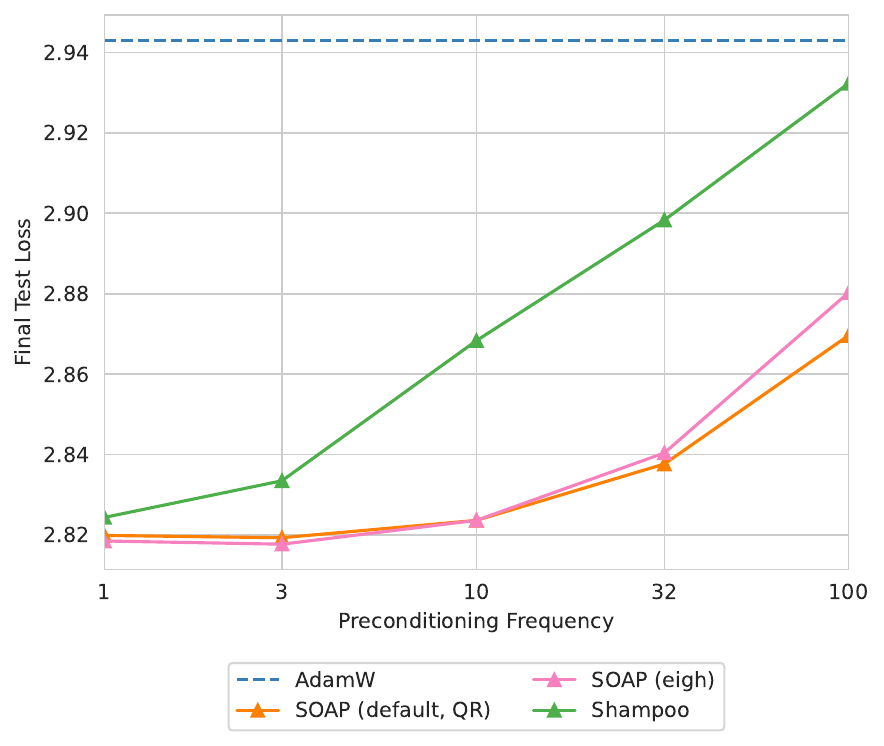}
    \caption{(Left) Depicting the overhead of SOAP over AdamW as a function of preconditioning frequency (Right) Comparing the performance of SOAP with \texttt{torch.linalg.eigh} for computing the eigenvectors with ~\Cref{alg:eigenvectors}, which uses \texttt{torch.linalg.qr}. Note that \texttt{torch.linalg.qr} is computationally more efficient than \texttt{torch.linalg.eigh} (as mentioned in \citet{eigh-vs-qr}); however, both seem to have comparable performance throughout the preconditioning frequency spectrum.}
    \label{fig:linalg-runtime}
\end{figure}

\textbf{Effect of frequency on  overhead:} In Figure~\ref{fig:linalg-runtime}
 (left), we observe that the overhead decreases as the preconditioning frequency increases, i.e., the frequency of invoking~\Cref{alg:eigenvectors}. If the only additional computation occurred in \Cref{alg:eigenvectors}, we would expect the overhead to scale as $1.0/(\text{preconditioning frequency})$, approaching zero. However, empirical results (Figure~\ref{fig:linalg-runtime} left) show that the overhead approaches an asymptote greater than zero. This is attributable to the additional matrix multiplications required to update $L$, update $R$, project the gradient, and reproject the gradient (for each layer) in the optimizer. Currently, these operations are performed in float32; reducing the precision of these operations, as proposed in \cite{4bitshampoo}, could lower this asymptote.

\subsubsection{Improving time overhead of SOAP}
\label{app:impr-time}
The per step overhead of SOAP can be reduced by using low precision to store the $L, R, Q_L, Q_R, V$ matrices~\citep{8bitadam,4bitshampoo}, which in turn will speed up computation done using these matrices. This approach cannot be used for reducing the overhead for the preconditioner update in popular deep learning frameworks such as Pytorch since $\texttt{torch.linalg.qr}$ does not support precision lower than $\texttt{float32}$. Orthogonal to the low precision approach we can improve the per step time overhead of SOAP by the following algorithmic approaches:
\begin{itemize}
    \item Using Adafactor instead of Adam (\Cref{app:adafactor}) as the diagonal preconditioner after rotating by $Q_L$ and $Q_R$. In this version of SOAP the overhead can be improved by from $m^3+n^3+2m^2n+2n^2m$ to $m^3+n^3+m^2n+n^2m+\max(m, n)^2\min(m, n) + \min(m, n)^3$ by merging the project and project back steps for the smaller dimension.
    \item Using one sided version of SOAP (\Cref{sec:one-sided}). This reduces overhead from $m^3+n^3+2m^2n+2n^2m$ to $\min(m, n)^3+2\min(m, n)^2\max(m, n)$. 
    \item Combining these approaches yields an overhead of $\min(m, n)^2\max(m, n)+2 \min(m, n)^3$
\end{itemize} 

Using one-sided version also reduces the second type of overhead from a calls to $\texttt{torch.linalg.qr}$ on a $m \times m$ and a $\ n \times n$ matrix to only a single call to $\min(m, n) \times \min(m, n)$ matrix.

\section{Discussion and Future Work}
We study an optimizer called SOAP: \textbf{S}hampo\textbf{O} with \textbf{A}dam in the \textbf{P}reconditioner's eigenbasis. We show that SOAP outperforms both AdamW and Shampoo in language modeling tasks and show that it is more robust to changes in preconditioning frequency than Shampoo. For future work, we would like to explore further improvements to the design of SOAP, in particular, related to using lower precision for the preconditioners as well as a better distributed implementation. We would also like to explore the performance of SOAP on other domains such as vision.

\section{Discussion and Limitations}
We study an optimizer called SOAP: \textbf{S}hampo\textbf{O} with \textbf{A}dam in the \textbf{P}reconditioner's eigenbasis. We show that SOAP outperforms both AdamW and Shampoo in language modeling tasks and show that it is more robust to changes in preconditioning frequency than Shampoo. While we have explored many factors such as batch size~(\Cref{sec:critical}) and training duration~(\Cref{sec:long}) we acknowledge that our study focuses on a relatively small scale compared to recent LLMs~\cite{llama} which are two orders of magnitude bigger. We hypothesize that our findings on the performance of SOAP would generalize to larger scales due to its theoretical foundation. SOAP's robustness is also supported by the fact that SOAP is equivalent to running Adam in a rotated space, and Adam has proven to be effective across scale and tasks. However, this hypothesis remains to be validated. %

For future work, we aim to improve the design of SOAP further, particularly by exploring the use of lower precision for preconditioners and optimizing its distributed implementation. Additionally, we are interested in testing SOAP’s performance in other domains, such as vision, to evaluate its performance across different types of tasks.

\section*{Acknowledgments}

SK, DM, and RZ acknowledges support from the Office of Naval Research under award N0001422-1-2377 and the National Science Foundation Grant under award \#IIS 2229881. This work has been made possible in part by a gift from the Chan Zuckerberg Initiative Foundation to establish the
Kempner Institute for the Study of Natural and Artificial Intelligence. NV, DM and RZ are supported
by a Simons Investigator Fellowship, NSF grant DMS-2134157, DARPA grant W911NF2010021,and
DOE grant DE-SC0022199. LJ acknowledges funding from the National Science Foundation
DMS-2134157.

\bibliography{ref}

\begin{thebibliography}{55}
\providecommand{\natexlab}[1]{#1}
\providecommand{\url}[1]{\texttt{#1}}
\expandafter\ifx\csname urlstyle\endcsname\relax
  \providecommand{\doi}[1]{doi: #1}\else
  \providecommand{\doi}{doi: \begingroup \urlstyle{rm}\Url}\fi

\bibitem[Anil et~al.(2020)Anil, Gupta, Koren, Regan, and Singer]{anil2020scalable}
Rohan Anil, Vineet Gupta, Tomer Koren, Kevin Regan, and Yoram Singer.
\newblock Scalable second order optimization for deep learning.
\newblock \emph{arXiv preprint arXiv:2002.09018}, 2020.

\bibitem[Ba et~al.(2017)Ba, Grosse, and Martens]{ba2017distributed}
Jimmy Ba, Roger Grosse, and James Martens.
\newblock Distributed second-order optimization using kronecker-factored approximations.
\newblock In \emph{International Conference on Learning Representations}, 2017.
\newblock URL \url{https://openreview.net/forum?id=SkkTMpjex}.

\bibitem[Chen et~al.(2023)Chen, Liang, Huang, Real, Wang, Pham, Dong, Luong, Hsieh, Lu, and Le]{lion}
Xiangning Chen, Chen Liang, Da~Huang, Esteban Real, Kaiyuan Wang, Hieu Pham, Xuanyi Dong, Thang Luong, Cho{-}Jui Hsieh, Yifeng Lu, and Quoc~V. Le.
\newblock Symbolic discovery of optimization algorithms.
\newblock In Alice Oh, Tristan Naumann, Amir Globerson, Kate Saenko, Moritz Hardt, and Sergey Levine (eds.), \emph{Advances in Neural Information Processing Systems 36: Annual Conference on Neural Information Processing Systems 2023, NeurIPS 2023, New Orleans, LA, USA, December 10 - 16, 2023}, 2023.
\newblock URL \url{http://papers.nips.cc/paper\_files/paper/2023/hash/9a39b4925e35cf447ccba8757137d84f-Abstract-Conference.html}.

\bibitem[Dahl et~al.(2023)Dahl, Schneider, Nado, Agarwal, Sastry, Hennig, Medapati, Eschenhagen, Kasimbeg, Suo, Bae, Gilmer, Peirson, Khan, Anil, Rabbat, Krishnan, Snider, Amid, Chen, Maddison, Vasudev, Badura, Garg, and Mattson]{dahl2023benchmarking}
George~E. Dahl, Frank Schneider, Zachary Nado, Naman Agarwal, Chandramouli~Shama Sastry, Philipp Hennig, Sourabh Medapati, Runa Eschenhagen, Priya Kasimbeg, Daniel Suo, Juhan Bae, Justin Gilmer, Abel~L. Peirson, Bilal Khan, Rohan Anil, Mike Rabbat, Shankar Krishnan, Daniel Snider, Ehsan Amid, Kongtao Chen, Chris~J. Maddison, Rakshith Vasudev, Michal Badura, Ankush Garg, and Peter Mattson.
\newblock Benchmarking neural network training algorithms, 2023.

\bibitem[Defazio et~al.(2024)Defazio, Yang, Mehta, Mishchenko, Khaled, and Cutkosky]{schedulefree}
Aaron Defazio, Xingyu Yang, Harsh Mehta, Konstantin Mishchenko, Ahmed Khaled, and Ashok Cutkosky.
\newblock The road less scheduled.
\newblock \emph{CoRR}, abs/2405.15682, 2024.
\newblock \doi{10.48550/ARXIV.2405.15682}.
\newblock URL \url{https://doi.org/10.48550/arXiv.2405.15682}.

\bibitem[Dehghani et~al.(2023)Dehghani, Djolonga, Mustafa, Padlewski, Heek, Gilmer, Steiner, Caron, Geirhos, Alabdulmohsin, et~al.]{dehghani2023scaling}
Mostafa Dehghani, Josip Djolonga, Basil Mustafa, Piotr Padlewski, Jonathan Heek, Justin Gilmer, Andreas~Peter Steiner, Mathilde Caron, Robert Geirhos, Ibrahim Alabdulmohsin, et~al.
\newblock Scaling vision transformers to 22 billion parameters.
\newblock In \emph{International Conference on Machine Learning}, pp.\  7480--7512. PMLR, 2023.

\bibitem[Dettmers et~al.(2022)Dettmers, Lewis, Shleifer, and Zettlemoyer]{8bitadam}
Tim Dettmers, Mike Lewis, Sam Shleifer, and Luke Zettlemoyer.
\newblock 8-bit optimizers via block-wise quantization.
\newblock In \emph{The Tenth International Conference on Learning Representations, {ICLR} 2022, Virtual Event, April 25-29, 2022}. OpenReview.net, 2022.
\newblock URL \url{https://openreview.net/forum?id=shpkpVXzo3h}.

\bibitem[Documentation(2024)]{eigh-vs-qr}
Documentation.
\newblock torch.linalg.eigh documentation.
\newblock \url{https://web.archive.org/web/20240519213242/https://pytorch.org/docs/stable/generated/torch.linalg.eigh.html}, 2024.

\bibitem[Duchi et~al.(2011{\natexlab{a}})Duchi, Hazan, and Singer]{JMLR:v12:duchi11a}
John Duchi, Elad Hazan, and Yoram Singer.
\newblock Adaptive subgradient methods for online learning and stochastic optimization.
\newblock \emph{Journal of Machine Learning Research}, 12\penalty0 (61):\penalty0 2121--2159, 2011{\natexlab{a}}.
\newblock URL \url{http://jmlr.org/papers/v12/duchi11a.html}.

\bibitem[Duchi et~al.(2011{\natexlab{b}})Duchi, Hazan, and Singer]{adagrad}
John~C. Duchi, Elad Hazan, and Yoram Singer.
\newblock Adaptive subgradient methods for online learning and stochastic optimization.
\newblock \emph{J. Mach. Learn. Res.}, 12:\penalty0 2121--2159, 2011{\natexlab{b}}.
\newblock \doi{10.5555/1953048.2021068}.
\newblock URL \url{https://dl.acm.org/doi/10.5555/1953048.2021068}.

\bibitem[Duvvuri et~al.(2024)Duvvuri, Devvrit, Anil, Hsieh, and Dhillon]{caspr}
Sai~Surya Duvvuri, Fnu Devvrit, Rohan Anil, Cho-Jui Hsieh, and Inderjit~S Dhillon.
\newblock Combining axes preconditioners through kronecker approximation for deep learning.
\newblock In \emph{The Twelfth International Conference on Learning Representations}, 2024.
\newblock URL \url{https://openreview.net/forum?id=8j9hz8DVi8}.

\bibitem[Eschenhagen et~al.(2023)Eschenhagen, Immer, Turner, Schneider, and Hennig]{eschenhagen2023kroneckerfactored}
Runa Eschenhagen, Alexander Immer, Richard~E Turner, Frank Schneider, and Philipp Hennig.
\newblock Kronecker-factored approximate curvature for modern neural network architectures.
\newblock In \emph{Thirty-seventh Conference on Neural Information Processing Systems}, 2023.
\newblock URL \url{https://openreview.net/forum?id=Ex3oJEKS53}.

\bibitem[Gao et~al.(2021)Gao, Liu, Huang, Wang, Wang, Xu, and Yu]{Gao_Liu_Huang_Wang_Wang_Xu_Yu_2021}
Kaixin Gao, Xiaolei Liu, Zhenghai Huang, Min Wang, Zidong Wang, Dachuan Xu, and Fan Yu.
\newblock A trace-restricted kronecker-factored approximation to natural gradient.
\newblock \emph{Proceedings of the AAAI Conference on Artificial Intelligence}, 35\penalty0 (9):\penalty0 7519--7527, May 2021.
\newblock \doi{10.1609/aaai.v35i9.16921}.
\newblock URL \url{https://ojs.aaai.org/index.php/AAAI/article/view/16921}.

\bibitem[Gemini~Team(2024)]{gemini15}
Google Gemini~Team.
\newblock {Gemini 1.5: Unlocking multimodal understanding across millions of tokens of context}.
\newblock \url{https://storage.googleapis.com/deepmind-media/gemini/gemini_v1_5_report.pdf}, 2024.
\newblock [Online; accessed 19-May-2024].

\bibitem[George et~al.(2018)George, Laurent, Bouthillier, Ballas, and Vincent]{ekfac}
Thomas George, C{\'{e}}sar Laurent, Xavier Bouthillier, Nicolas Ballas, and Pascal Vincent.
\newblock Fast approximate natural gradient descent in a kronecker factored eigenbasis.
\newblock In Samy Bengio, Hanna~M. Wallach, Hugo Larochelle, Kristen Grauman, Nicol{\`{o}} Cesa{-}Bianchi, and Roman Garnett (eds.), \emph{Advances in Neural Information Processing Systems 31: Annual Conference on Neural Information Processing Systems 2018, NeurIPS 2018, December 3-8, 2018, Montr{\'{e}}al, Canada}, pp.\  9573--9583, 2018.
\newblock URL \url{https://proceedings.neurips.cc/paper/2018/hash/48000647b315f6f00f913caa757a70b3-Abstract.html}.

\bibitem[Groeneveld et~al.(2024)Groeneveld, Beltagy, Walsh, Bhagia, Kinney, Tafjord, Jha, Ivison, Magnusson, Wang, et~al.]{groeneveld2024olmo}
Dirk Groeneveld, Iz~Beltagy, Pete Walsh, Akshita Bhagia, Rodney Kinney, Oyvind Tafjord, Ananya~Harsh Jha, Hamish Ivison, Ian Magnusson, Yizhong Wang, et~al.
\newblock Olmo: Accelerating the science of language models.
\newblock \emph{arXiv preprint arXiv:2402.00838}, 2024.

\bibitem[Gupta et~al.(2018{\natexlab{a}})Gupta, Koren, and Singer]{gupta2018shampoo}
Vineet Gupta, Tomer Koren, and Yoram Singer.
\newblock Shampoo: Preconditioned stochastic tensor optimization.
\newblock In \emph{International Conference on Machine Learning}, pp.\  1842--1850. PMLR, 2018{\natexlab{a}}.

\bibitem[Gupta et~al.(2018{\natexlab{b}})Gupta, Koren, and Singer]{shampoo}
Vineet Gupta, Tomer Koren, and Yoram Singer.
\newblock Shampoo: Preconditioned stochastic tensor optimization.
\newblock In Jennifer~G. Dy and Andreas Krause (eds.), \emph{Proceedings of the 35th International Conference on Machine Learning, {ICML} 2018, Stockholmsm{\"{a}}ssan, Stockholm, Sweden, July 10-15, 2018}, volume~80 of \emph{Proceedings of Machine Learning Research}, pp.\  1837--1845. {PMLR}, 2018{\natexlab{b}}.
\newblock URL \url{http://proceedings.mlr.press/v80/gupta18a.html}.

\bibitem[Hendrycks \& Gimpel(2016)Hendrycks and Gimpel]{hendrycks2016gaussian}
Dan Hendrycks and Kevin Gimpel.
\newblock Gaussian error linear units (gelus).
\newblock \emph{arXiv preprint arXiv:1606.08415}, 2016.

\bibitem[Ishikawa \& Yokota(2024)Ishikawa and Yokota]{ishikawa2024when}
Satoki Ishikawa and Rio Yokota.
\newblock When does second-order optimization speed up training?
\newblock In \emph{The Second Tiny Papers Track at ICLR 2024}, 2024.
\newblock URL \url{https://openreview.net/forum?id=NLrfEsSZNb}.

\bibitem[Kaddour et~al.(2023)Kaddour, Key, Nawrot, Minervini, and Kusner]{KaddourKNMK23}
Jean Kaddour, Oscar Key, Piotr Nawrot, Pasquale Minervini, and Matt~J. Kusner.
\newblock No train no gain: Revisiting efficient training algorithms for transformer-based language models.
\newblock In Alice Oh, Tristan Naumann, Amir Globerson, Kate Saenko, Moritz Hardt, and Sergey Levine (eds.), \emph{Advances in Neural Information Processing Systems 36: Annual Conference on Neural Information Processing Systems 2023, NeurIPS 2023, New Orleans, LA, USA, December 10 - 16, 2023}, 2023.
\newblock URL \url{http://papers.nips.cc/paper\_files/paper/2023/hash/51f3d6252706100325ddc435ba0ade0e-Abstract-Conference.html}.

\bibitem[Kingma \& Ba(2015)Kingma and Ba]{adam}
Diederik~P. Kingma and Jimmy Ba.
\newblock Adam: {A} method for stochastic optimization.
\newblock In Yoshua Bengio and Yann LeCun (eds.), \emph{3rd International Conference on Learning Representations, {ICLR} 2015, San Diego, CA, USA, May 7-9, 2015, Conference Track Proceedings}, 2015.
\newblock URL \url{http://arxiv.org/abs/1412.6980}.

\bibitem[Li(2018)]{li18psgd}
Xi-Lin Li.
\newblock Preconditioned stochastic gradient descent.
\newblock \emph{IEEE Transactions on Neural Networks and Learning Systems}, 29\penalty0 (5):\penalty0 1454--1466, 2018.
\newblock \doi{10.1109/TNNLS.2017.2672978}.

\bibitem[Li(2024)]{li2024stochastichessianfittingslie}
Xi-Lin Li.
\newblock Stochastic hessian fittings with lie groups, 2024.
\newblock URL \url{https://arxiv.org/abs/2402.11858}.

\bibitem[Lin et~al.(2024)Lin, Dangel, Eschenhagen, Bae, Turner, and Makhzani]{wulin2024}
Wu~Lin, Felix Dangel, Runa Eschenhagen, Juhan Bae, Richard~E. Turner, and Alireza Makhzani.
\newblock Can we remove the square-root in adaptive gradient methods? {A} second-order perspective.
\newblock In Ruslan Salakhutdinov, Zico Kolter, Katherine Heller, Adrian Weller, Nuria Oliver, Jonathan Scarlett, and Felix Berkenkamp (eds.), \emph{Proceedings of the 41st International Conference on Machine Learning}, volume 235 of \emph{Proceedings of Machine Learning Research}, pp.\  29949--29973. PMLR, 21--27 Jul 2024.
\newblock URL \url{https://proceedings.mlr.press/v235/lin24e.html}.

\bibitem[Liu et~al.(2024)Liu, Li, Hall, Liang, and Ma]{liu2024sophia}
Hong Liu, Zhiyuan Li, David Leo~Wright Hall, Percy Liang, and Tengyu Ma.
\newblock Sophia: A scalable stochastic second-order optimizer for language model pre-training.
\newblock In \emph{The Twelfth International Conference on Learning Representations}, 2024.
\newblock URL \url{https://openreview.net/forum?id=3xHDeA8Noi}.

\bibitem[Lv et~al.(2024{\natexlab{a}})Lv, Yan, Guo, Lv, and Qiu]{adalomo}
Kai Lv, Hang Yan, Qipeng Guo, Haijun Lv, and Xipeng Qiu.
\newblock Adalomo: Low-memory optimization with adaptive learning rate.
\newblock In Lun{-}Wei Ku, Andre Martins, and Vivek Srikumar (eds.), \emph{Findings of the Association for Computational Linguistics, {ACL} 2024, Bangkok, Thailand and virtual meeting, August 11-16, 2024}, pp.\  12486--12502. Association for Computational Linguistics, 2024{\natexlab{a}}.
\newblock URL \url{https://aclanthology.org/2024.findings-acl.742}.

\bibitem[Lv et~al.(2024{\natexlab{b}})Lv, Yang, Liu, Guo, and Qiu]{lomo}
Kai Lv, Yuqing Yang, Tengxiao Liu, Qipeng Guo, and Xipeng Qiu.
\newblock Full parameter fine-tuning for large language models with limited resources.
\newblock In Lun{-}Wei Ku, Andre Martins, and Vivek Srikumar (eds.), \emph{Proceedings of the 62nd Annual Meeting of the Association for Computational Linguistics (Volume 1: Long Papers), {ACL} 2024, Bangkok, Thailand, August 11-16, 2024}, pp.\  8187--8198. Association for Computational Linguistics, 2024{\natexlab{b}}.
\newblock URL \url{https://aclanthology.org/2024.acl-long.445}.

\bibitem[Martens(2010)]{martens10}
James Martens.
\newblock Deep learning via hessian-free optimization.
\newblock In \emph{Proceedings of the 27th International Conference on International Conference on Machine Learning}, ICML'10, pp.\  735–742, Madison, WI, USA, 2010. Omnipress.
\newblock ISBN 9781605589077.

\bibitem[Martens \& Grosse(2015)Martens and Grosse]{martens15}
James Martens and Roger Grosse.
\newblock Optimizing neural networks with kronecker-factored approximate curvature.
\newblock In Francis Bach and David Blei (eds.), \emph{Proceedings of the 32nd International Conference on Machine Learning}, volume~37 of \emph{Proceedings of Machine Learning Research}, pp.\  2408--2417, Lille, France, 07--09 Jul 2015. PMLR.
\newblock URL \url{https://proceedings.mlr.press/v37/martens15.html}.

\bibitem[Martens et~al.(2018)Martens, Ba, and Johnson]{martens2018kroneckerrecurrent}
James Martens, Jimmy Ba, and Matt Johnson.
\newblock Kronecker-factored curvature approximations for recurrent neural networks.
\newblock In \emph{International Conference on Learning Representations}, 2018.
\newblock URL \url{https://openreview.net/forum?id=HyMTkQZAb}.

\bibitem[McCandlish et~al.(2018)McCandlish, Kaplan, Amodei, and Team]{critical}
Sam McCandlish, Jared Kaplan, Dario Amodei, and OpenAI~Dota Team.
\newblock An empirical model of large-batch training.
\newblock \emph{CoRR}, abs/1812.06162, 2018.
\newblock URL \url{http://arxiv.org/abs/1812.06162}.

\bibitem[MLCommons(2024)]{algoperfresults}
MLCommons.
\newblock Mlc algoperf benchmark competition.
\newblock \url{https://mlcommons.org/2024/08/mlc-algoperf-benchmark-competition/}, 2024.
\newblock Accessed: 2024-10-01.

\bibitem[Morwani et~al.(2024)Morwani, Shapira, Vyas, Malach, Kakade, and Janson]{whyshampoo}
Depen Morwani, Itai Shapira, Nikhil Vyas, Eran Malach, Sham Kakade, and Lucas Janson.
\newblock A new perspective on shampoo's preconditioner.
\newblock \emph{arXiv preprint arXiv:2406.17748}, 2024.

\bibitem[Osawa et~al.(2019)Osawa, Tsuji, Ueno, Naruse, Yokota, and Matsuoka]{kazuki19}
Kazuki Osawa, Yohei Tsuji, Yuichiro Ueno, Akira Naruse, Rio Yokota, and Satoshi Matsuoka.
\newblock Large-scale distributed second-order optimization using kronecker-factored approximate curvature for deep convolutional neural networks.
\newblock In \emph{2019 IEEE/CVF Conference on Computer Vision and Pattern Recognition (CVPR)}, pp.\  12351--12359, 2019.
\newblock \doi{10.1109/CVPR.2019.01264}.

\bibitem[Paszke et~al.(2019)Paszke, Gross, Massa, Lerer, Bradbury, Chanan, Killeen, Lin, Gimelshein, Antiga, et~al.]{paszke2019pytorch}
Adam Paszke, Sam Gross, Francisco Massa, Adam Lerer, James Bradbury, Gregory Chanan, Trevor Killeen, Zeming Lin, Natalia Gimelshein, Luca Antiga, et~al.
\newblock Pytorch: An imperative style, high-performance deep learning library.
\newblock \emph{Advances in neural information processing systems}, 32, 2019.

\bibitem[Peirson et~al.(2022)Peirson, Amid, Chen, Feinberg, Warmuth, and Anil]{peirson2022fishy}
Abel Peirson, Ehsan Amid, Yatong Chen, Vladimir Feinberg, Manfred~K Warmuth, and Rohan Anil.
\newblock Fishy: Layerwise fisher approximation for higher-order neural network optimization.
\newblock In \emph{Has it Trained Yet? NeurIPS 2022 Workshop}, 2022.
\newblock URL \url{https://openreview.net/forum?id=cScb-RrBQC}.

\bibitem[Pooladzandi \& Li(2024)Pooladzandi and Li]{pooladzandi2024curvatureinformed}
Omead Pooladzandi and Xi-Lin Li.
\newblock Curvature-informed {SGD} via general purpose lie-group preconditioners, 2024.
\newblock URL \url{https://openreview.net/forum?id=sawjxRnVpF}.

\bibitem[Porian et~al.(2024)Porian, Wortsman, Jitsev, Schmidt, and Carmon]{porian}
Tomer Porian, Mitchell Wortsman, Jenia Jitsev, Ludwig Schmidt, and Yair Carmon.
\newblock Resolving discrepancies in compute-optimal scaling of language models.
\newblock \emph{CoRR}, abs/2406.19146, 2024.
\newblock \doi{10.48550/ARXIV.2406.19146}.
\newblock URL \url{https://doi.org/10.48550/arXiv.2406.19146}.

\bibitem[Puiu(2022)]{Puiu22}
Constantin~Octavian Puiu.
\newblock Randomized k-facs: Speeding up k-fac with randomized numerical linear algebra.
\newblock In Hujun Yin, David Camacho, and Peter Tino (eds.), \emph{Intelligent Data Engineering and Automated Learning -- IDEAL 2022}, pp.\  411--422, Cham, 2022. Springer International Publishing.
\newblock ISBN 978-3-031-21753-1.

\bibitem[Puiu(2023)]{puiu2023brandnewkfacsspeeding}
Constantin~Octavian Puiu.
\newblock Brand new k-facs: Speeding up k-fac with online decomposition updates, 2023.
\newblock URL \url{https://arxiv.org/abs/2210.08494}.

\bibitem[Raffel et~al.(2020)Raffel, Shazeer, Roberts, Lee, Narang, Matena, Zhou, Li, and Liu]{raffel2020exploring}
Colin Raffel, Noam Shazeer, Adam Roberts, Katherine Lee, Sharan Narang, Michael Matena, Yanqi Zhou, Wei Li, and Peter~J Liu.
\newblock Exploring the limits of transfer learning with a unified text-to-text transformer.
\newblock \emph{Journal of machine learning research}, 21\penalty0 (140):\penalty0 1--67, 2020.

\bibitem[Ren \& Goldfarb(2021)Ren and Goldfarb]{yi21}
Yi~Ren and Donald Goldfarb.
\newblock Tensor normal training for deep learning models.
\newblock In M.~Ranzato, A.~Beygelzimer, Y.~Dauphin, P.S. Liang, and J.~Wortman Vaughan (eds.), \emph{Advances in Neural Information Processing Systems}, volume~34, pp.\  26040--26052. Curran Associates, Inc., 2021.
\newblock URL \url{https://proceedings.neurips.cc/paper_files/paper/2021/file/dae3312c4c6c7000a37ecfb7b0aeb0e4-Paper.pdf}.

\bibitem[Shazeer \& Stern(2018)Shazeer and Stern]{adafactor}
Noam Shazeer and Mitchell Stern.
\newblock Adafactor: Adaptive learning rates with sublinear memory cost.
\newblock In Jennifer~G. Dy and Andreas Krause (eds.), \emph{Proceedings of the 35th International Conference on Machine Learning, {ICML} 2018, Stockholmsm{\"{a}}ssan, Stockholm, Sweden, July 10-15, 2018}, volume~80 of \emph{Proceedings of Machine Learning Research}, pp.\  4603--4611. {PMLR}, 2018.
\newblock URL \url{http://proceedings.mlr.press/v80/shazeer18a.html}.

\bibitem[Shi et~al.(2023)Shi, Lee, Iwasaki, Gallego{-}Posada, Li, Rangadurai, Mudigere, and Rabbat]{distributedshampoo}
Hao{-}Jun~Michael Shi, Tsung{-}Hsien Lee, Shintaro Iwasaki, Jose Gallego{-}Posada, Zhijing Li, Kaushik Rangadurai, Dheevatsa Mudigere, and Michael Rabbat.
\newblock A distributed data-parallel pytorch implementation of the distributed shampoo optimizer for training neural networks at-scale.
\newblock \emph{CoRR}, abs/2309.06497, 2023.
\newblock \doi{10.48550/ARXIV.2309.06497}.
\newblock URL \url{https://doi.org/10.48550/arXiv.2309.06497}.

\bibitem[Su et~al.(2024)Su, Ahmed, Lu, Pan, Bo, and Liu]{su2024roformer}
Jianlin Su, Murtadha Ahmed, Yu~Lu, Shengfeng Pan, Wen Bo, and Yunfeng Liu.
\newblock Roformer: Enhanced transformer with rotary position embedding.
\newblock \emph{Neurocomputing}, 568:\penalty0 127063, 2024.

\bibitem[Touvron et~al.(2023)Touvron, Lavril, Izacard, Martinet, Lachaux, Lacroix, Rozi{\`{e}}re, Goyal, Hambro, Azhar, Rodriguez, Joulin, Grave, and Lample]{llama}
Hugo Touvron, Thibaut Lavril, Gautier Izacard, Xavier Martinet, Marie{-}Anne Lachaux, Timoth{\'{e}}e Lacroix, Baptiste Rozi{\`{e}}re, Naman Goyal, Eric Hambro, Faisal Azhar, Aur{\'{e}}lien Rodriguez, Armand Joulin, Edouard Grave, and Guillaume Lample.
\newblock Llama: Open and efficient foundation language models.
\newblock \emph{CoRR}, abs/2302.13971, 2023.
\newblock \doi{10.48550/ARXIV.2302.13971}.
\newblock URL \url{https://doi.org/10.48550/arXiv.2302.13971}.

\bibitem[Vyas et~al.(2024)Vyas, Morwani, and Kakade]{adamem}
Nikhil Vyas, Depen Morwani, and Sham~M. Kakade.
\newblock Adamem: Memory efficient momentum for adafactor.
\newblock In \emph{2nd Workshop on Advancing Neural Network Training: Computational Efficiency, Scalability, and Resource Optimization (WANT@ICML 2024)}, 2024.
\newblock URL \url{https://openreview.net/forum?id=fZqMVTz7K5}.

\bibitem[Wang et~al.(2024)Wang, Li, Zhou, and Huang]{4bitshampoo}
Sike Wang, Jia Li, Pan Zhou, and Hua Huang.
\newblock 4-bit shampoo for memory-efficient network training.
\newblock \emph{CoRR}, abs/2405.18144, 2024.
\newblock \doi{10.48550/ARXIV.2405.18144}.
\newblock URL \url{https://doi.org/10.48550/arXiv.2405.18144}.

\bibitem[Wortsman et~al.(2024)Wortsman, Liu, Xiao, Everett, Alemi, Adlam, Co-Reyes, Gur, Kumar, Novak, Pennington, Sohl-Dickstein, Xu, Lee, Gilmer, and Kornblith]{wortsman2024smallscale}
Mitchell Wortsman, Peter~J Liu, Lechao Xiao, Katie~E Everett, Alexander~A Alemi, Ben Adlam, John~D Co-Reyes, Izzeddin Gur, Abhishek Kumar, Roman Novak, Jeffrey Pennington, Jascha Sohl-Dickstein, Kelvin Xu, Jaehoon Lee, Justin Gilmer, and Simon Kornblith.
\newblock Small-scale proxies for large-scale transformer training instabilities.
\newblock In \emph{The Twelfth International Conference on Learning Representations}, 2024.
\newblock URL \url{https://openreview.net/forum?id=d8w0pmvXbZ}.

\bibitem[Zhai et~al.(2022)Zhai, Kolesnikov, Houlsby, and Beyer]{zhai}
Xiaohua Zhai, Alexander Kolesnikov, Neil Houlsby, and Lucas Beyer.
\newblock Scaling vision transformers.
\newblock In \emph{{IEEE/CVF} Conference on Computer Vision and Pattern Recognition, {CVPR} 2022, New Orleans, LA, USA, June 18-24, 2022}, pp.\  1204--1213. {IEEE}, 2022.
\newblock \doi{10.1109/CVPR52688.2022.01179}.
\newblock URL \url{https://doi.org/10.1109/CVPR52688.2022.01179}.

\bibitem[Zhang et~al.(2019)Zhang, Li, Nado, Martens, Sachdeva, Dahl, Shallue, and Grosse]{nqm}
Guodong Zhang, Lala Li, Zachary Nado, James Martens, Sushant Sachdeva, George~E. Dahl, Christopher~J. Shallue, and Roger~B. Grosse.
\newblock Which algorithmic choices matter at which batch sizes? insights from a noisy quadratic model.
\newblock In Hanna~M. Wallach, Hugo Larochelle, Alina Beygelzimer, Florence d'Alch{\'{e}}{-}Buc, Emily~B. Fox, and Roman Garnett (eds.), \emph{Advances in Neural Information Processing Systems 32: Annual Conference on Neural Information Processing Systems 2019, NeurIPS 2019, December 8-14, 2019, Vancouver, BC, Canada}, pp.\  8194--8205, 2019.
\newblock URL \url{https://proceedings.neurips.cc/paper/2019/hash/e0eacd983971634327ae1819ea8b6214-Abstract.html}.

\bibitem[Zhao et~al.(2024{\natexlab{a}})Zhao, Zhang, Chen, Wang, Anandkumar, and Tian]{galore}
Jiawei Zhao, Zhenyu Zhang, Beidi Chen, Zhangyang Wang, Anima Anandkumar, and Yuandong Tian.
\newblock Galore: Memory-efficient {LLM} training by gradient low-rank projection.
\newblock \emph{CoRR}, abs/2403.03507, 2024{\natexlab{a}}.
\newblock \doi{10.48550/ARXIV.2403.03507}.
\newblock URL \url{https://doi.org/10.48550/arXiv.2403.03507}.

\bibitem[Zhao et~al.(2024{\natexlab{b}})Zhao, Zhang, Chen, Wang, Anandkumar, and Tian]{galoregithub}
Jiawei Zhao, Zhenyu Zhang, Beidi Chen, Zhangyang Wang, Anima Anandkumar, and Yuandong Tian.
\newblock (code) galore: Memory-efficient {LLM} training by gradient low-rank projection.
\newblock \url{https://github.com/jiaweizzhao/GaLore}, 2024{\natexlab{b}}.

\bibitem[Zhao et~al.(2024{\natexlab{c}})Zhao, Morwani, Brandfonbrener, Vyas, and Kakade]{zhaoscience}
Rosie Zhao, Depen Morwani, David Brandfonbrener, Nikhil Vyas, and Sham~M. Kakade.
\newblock Deconstructing what makes a good optimizer for language models.
\newblock \emph{CoRR}, abs/2407.07972, 2024{\natexlab{c}}.
\newblock \doi{10.48550/ARXIV.2407.07972}.
\newblock URL \url{https://doi.org/10.48550/arXiv.2407.07972}.

\end{thebibliography}
\bibliographystyle{iclr2024_conference}

\appendix

\section{Experimental Setup}
\label{app:setup}

Many aspects of our setup such as models are the same as in~\citet{zhaoscience}. We will restate those details verbatim for completeness.

We train language models on C4 tokenized with the T5 tokenizer \citep{raffel2020exploring} and report results in terms of validation loss.

\paragraph{Models.} We start from the OLMo codebase \citep{groeneveld2024olmo} and train decoder-only transformer models of three sizes: 210m, 360m, and 660m, where the parameter count refers to non-embedding parameters. The models have widths of 1024, 1024, and 1408 and depths of 12, 24, 24. We used the 210m model to explore various ablations, most of our reported results are on 360m and 660m. The MLP hidden dimension is 4x of the width. The activation function is GeLU \citep{hendrycks2016gaussian}. We use RoPE positional encodings \citep{su2024roformer}. Attention heads are always dimension 64. We use PyTorch default LayerNorm. We use QK layer norm~\citep{dehghani2023scaling}. Following \citet{wortsman2024smallscale} we do not learn biases for the linear layers or LayerNorms. We train in mixed precision with bfloat16.

\paragraph{Algorithms.} We use the standard Pytorch implementation of AdamW \citep{paszke2019pytorch}, the DistributedShampoo~\cite{distributedshampoo}  implementation of Shampoo. We implement ourselves SOAP and GaLore starting from an older version of Pytorch implementation of AdamW and the official GaLore implementation~\cite{galoregithub}.

\paragraph{Default hyperparameters.} We use $\beta_1 = 0.95$, as we found it to outperform $\beta_1 = 0.9$ in our sweeps for the 360m model. Following~\citet{wortsman2024smallscale} we use decoupled weight decay with coefficient \num{1e-4} and z-loss with coefficient \num{1e-4}. We use the default value of $\eps = 1e-8$ in AdamW (actual or when used for grafting), SOAP and GaLore. We use warmup followed by cosine decay as our scheduler.  We start the warmup and end the cosine decay at $0.1x$ the maximum learning rate.

\paragraph{Default hyperparameters for DistributedShampoo} \citet{distributedshampoo} state that they find the optimal exponent to be either $-1/2$ or $-1.82/4 \approx -1/2.2$. Our preliminary findings were similar to this. Hence we set the default values of exponent to be $-1/2.5$ for both 1D and 2D parameters. We set $\eps_{\text{shampoo}} = \num{1e-12}$ and $\beta_{\text{shampoo}} = 0.95$ based on our initial set of experiments on the 210m model.

\paragraph{Default hyperparameters for GaLore} GaLore introduces an additional hyperparameter called scale ($\alpha$) since due to low rank updates the overall update magnitude decreases. Since we are running a full rank version of GaLore we set $\alpha = 1$.

\paragraph{Token counts.} For all of our runs we use a sequence length of 1024. For all models (except in \Cref{sec:critical}), we use a token batch size of 2048k $\approx$ 2m. We default to training models for the approximately ``chinchilla optimal'' number of tokens that is $ \approx$20 times the number of parameters. Explicitly, this means for our default batch size of 2m, the 210m models are trained for 1600 steps or $\approx$ 3.3b tokens. The 360m models are trained for 3200 steps, the 660m models are trained for 6400 steps.

\subsection{Sweeping over hyperparameters}

\textbf{AdamW, 2m batch size:} Starting from the default hyperparameters above we do the following sweeps:
\begin{enumerate}
    \item We sweep over learning rate in $\{.1, .0316, .01,\ldots, \num{3.16e-4}\}$.
    \item (360m) We sweep over the cross product of best 3 learning rates and $\beta_1 \in \{0.9, 0.95, 0.99\}$.
    \item (360m) We sweep over the cross product of best 3 learning rates and $\beta_2 \in \{0.9, 0.95, 0.99\}$.
\end{enumerate}

The last two of the sweeps did not yield any benefit for the 360m model with 2m batch size hence we only sweep over learning rate for the 660m model with 2m batch size.

\textbf{DistributedShampoo, 2m batch size:} Starting from the default hyperparameters above we do the following sweeps:
\begin{enumerate}
    \item We sweep over learning rate in $\{.1, .0316, .01,\ldots, \num{3.16e-4}\}$.
    \item (360m) We sweep over over the cross product of best 3 learning rates from above and $\eps_{\text{shampoo}} \in \{\num{1e-11}, \num{1e-12}, \num{1e-13}\}$.
    \item (360m) We sweep over over the cross product of best 3 learning rates from above and $\beta_{\text{shampoo}} \in \{.9, .95, .975\}$.
    \item Let $e_1, e_2$ denote the exponents used in DistributedShampoo for 1D and 2D parameters respectively. We also sweep over the cross product of best 3 learning rates from above and $(e_1, e_2)$ in $\{(2, 2), (2.5, 2.5), (3, 3), (2, 4)\}$.
\end{enumerate}

These sweeps did not yield any significant improvement in performance ($<.004$) for the 360m model. Hence we only sweep over the learning rate for the 660m model.

\textbf{SOAP, 2m batch size:} Starting from the default hyperparameters above we sweep over learning rate in $\{.1, .0316, .01,\ldots, \num{3.16e-4}\}$.

\textbf{AdamW, 256k batch size:}
For the 360m model with 256 batch size we start from the default hyperparameters and do the following sweeps:
\begin{enumerate}
    \item We sweep over learning rate in $\{.1, .0316, .01,\ldots, \num{3.16e-4}\}$.
    \item We sweep over the cross product of best 3 learning rates and $\beta_2 \in \{0.95,0.99\}$.
\end{enumerate}

In the second sweep we observe small improvements in performance by using $\beta_2 = .99$, so our final numbers use $\beta_2 = .99$. This (small) improvement in performance by using a larger $\beta_2$ at smaller batch sizes was also observed by~\citet{porian,zhaoscience}.

\textbf{DistributedShampoo, 256k batch size:}
For the 360m model with 256 batch size we start from the default hyperparameters and do the following sweeps:
\begin{enumerate}
    \item We sweep over learning rate in $\{.1, .0316, .01,\ldots, \num{3.16e-4}\}$.
    \item We sweep over the cross product of best 3 learning rates and $(\beta_2, \beta_{\text{shampoo}}) \in \{(.95, .95), (.99, .99)\}$.
\end{enumerate}

In the second sweep we observe small improvements in performance by using $\beta_2 = \beta_{\text{shampoo}} = .99$, so our final numbers use $\beta_2 = \beta_{\text{shampoo}} = .99$. 

\textbf{SOAP, 256k batch size:}
For the 360m model with 256 batch size we start from the default hyperparameters and do the following sweeps:
\begin{enumerate}
    \item We sweep over learning rate in $\{.1, .0316, .01,\ldots, \num{3.16e-4}\}$.
    \item We sweep over the cross product of best 3 learning rates and $\beta_2 \in \{.95, .99\}$.
\end{enumerate}

In the second sweep we observe small improvements in performance by using $\beta_2 = .99$, so our final numbers use $\beta_2 = .99$. 

\textbf{Preconditioning frequency sweeps:} For the preconditioning frequency experiments of SOAP and Shampoo (~\Cref{fig:main} (right)), for each frequency we do a learning rate sweep over the best 3 learning rates found at preconditioning frequency 10. Other hyperparameters are set to their optimal values obtained using the precondition frequency 10 sweeps.

\textbf{360m and 660m shorter runs:} For each of the shorter runs of 360m and 660m models for the SOAP optimizer (\Cref{fig:prec_overhead}), we did learning rate sweep over the best 3 learning rates found for the standard length run. Other hyperparameters are set to their optimal values obtained using the standard length run.

\textbf{Warmup:} The warmup duration for the 360m and 660m models were 600 and 1200 steps respectively. For the shorter runs (\Cref{fig:prec_overhead}), for 360m model, the warmup durations were 400, 400, 500 and 525 steps for 0.5, 0.625, 0.75 and 0.875 runs respectively. For the 660m model, the warmup durations were 600, 750, 900 and 1050 steps for 0.5, 0.625, 0.75 and 0.875 runs respectively. For 360m model with 256k batch size (\Cref{sec:critical}) we use a warmup for 4000 steps (total steps is 25000).

\section{GaLore}
\label{app:galore}

We tried GaLore for 210m model, and while it outperformed AdamW it performed worse than Shampoo. Hence we do not try GaLore for higher model sizes.

\textbf{Hyperparameter sweeps:} We did the following sweeps:
\begin{enumerate}
    \item We swept the cross product over learning rate ($3.16e-4, 1e-3, 3.16e-3, 1e-2$), preconditioning frequency ($10, 50, 200$), both sided and one sided versions. Frequency 200 had the best results matching the observation of~\citet{galore}.
    \item We did a cross product sweep over learning rate ($3.16e-4, 1e-3, 3.16e-3, 1e-2$), both sided and one sided versions with $\beta_2 = .99$ instead of $.95$ and preconditioning frequency 200.
    \item We did a cross product sweep over learning rate ($3.16e-4, 1e-3, 3.16e-3, 1e-2$), both sided and one sided versions, preconditioning frequency ($50, 200$)  with $\beta_1 = .9$ instead of $.95$.
\end{enumerate}

The best performing run among all of these achieved a final loss of 3.12 while the best Shampoo run achieved a final loss of 3.10.

\end{document}